\documentclass[letterpaper,11pt]{article}
\usepackage{parskip}
\usepackage[english]{babel}
\usepackage[nottoc]{tocbibind}

\usepackage[square]{natbib}

\usepackage{diagbox}
\usepackage{makecell}
\usepackage{booktabs}
\usepackage{tikz}
\usepackage{amsmath}
\usepackage{amsfonts}
\usepackage{amssymb}
\usepackage{amsthm}
\usepackage{url,ifthen}
\usepackage{enumitem}
\usepackage{srcltx}
\usepackage{multirow}
\usepackage{boxedminipage}
\usepackage[margin=1.1in]{geometry}
\usepackage{nicefrac}
\usepackage{xspace}
\usepackage{graphicx}
\usepackage{color}
\usepackage{colortbl}
\usepackage{setspace}
\usepackage{soul}
\usepackage{pifont}
\usepackage{hyperref}
\usepackage{cleveref}
\usepackage{mathptmx}

\definecolor{DarkGreen}{rgb}{0.1,0.5,0.1}
\definecolor{DarkRed}{rgb}{0.5,0.1,0.1}
\definecolor{DarkBlue}{rgb}{0.1,0.1,0.5}
\definecolor{Gray}{rgb}{0.2,0.2,0.2}

\usepackage{listings}
\lstdefinestyle{mystyle}{
    commentstyle=\color{DarkBlue},
    keywordstyle=\color{DarkRed},
    numberstyle=\tiny\color{Gray},
    stringstyle=\color{DarkGreen},
    basicstyle=\footnotesize,
    breakatwhitespace=false,         
    breaklines=true,                 
    captionpos=b,                    
    keepspaces=true,                 
    numbers=left,                    
    numbersep=5pt,                  
    showspaces=false,                
    showstringspaces=false,
    showtabs=false,                  
    tabsize=2
}
\usepackage[small]{caption}
\usepackage{hyperref}
\hypersetup{
    unicode=false,          
    pdftoolbar=true,        
    pdfmenubar=true,        
    pdffitwindow=false,      
    pdfnewwindow=true,      
    colorlinks=true,       
    linkcolor=DarkGreen,          
    citecolor=DarkGreen,        
    filecolor=DarkRed,      
    urlcolor=DarkBlue,          
    pdftitle={},
    pdfauthor={},
}

\def\draft{1}

\def\submit{0}

\ifnum\draft=1 
    \def\ShowAuthNotes{1}
\else
    \def\ShowAuthNotes{0}
\fi

\ifnum\submit=1
\newcommand{\forsubmit}[1]{#1}
\newcommand{\forreals}[1]{}
\else
\newcommand{\forreals}[1]{#1}
\newcommand{\forsubmit}[1]{}
\fi

\ifnum\ShowAuthNotes=1
\newcommand{\authnote}[2]{{ \footnotesize \bf{\color{DarkRed}[#1's Note:
{\color{DarkBlue}#2}]}}}
\else
\newcommand{\authnote}[2]{}
\fi

\newtheorem{theorem}{Theorem}[section]
\newtheorem{remark}[theorem]{Remark}
\newtheorem{lemma}[theorem]{Lemma}

\newtheorem{fact}[theorem]{Fact}

\theoremstyle{definition}
\newtheorem{definition}[theorem]{Definition}
\newtheorem{assumption}[theorem]{Assumption}

\newtheoremstyle{example_contd}
{\topsep} {\topsep}%
{}
{}
{\bfseries}
{.}
{1em}
{\thmname{#1} \thmnumber{ #2}\thmnote{#3} (continued)}

\theoremstyle{example_contd}

\newcommand{\chapterref}[1]{\hyperref[ch:#1]{Chapter~\ref{ch:#1}}}

\newcommand{\claimref}[1]{\hyperref[claim:#1]{Claim~\ref{claim:#1}}}

\newcommand{\corollaryref}[1]{\hyperref[cor:#1]{Corollary~\ref{cor:#1}}}

\newcommand{\definitionref}[1]{\hyperref[def:#1]{Definition~\ref{def:#1}}}

\newcommand{\equationref}[1]{\hyperref[eq:#1]{Equation~\ref{eq:#1}}}

\newcommand{\factref}[1]{\hyperref[fact:#1]{Fact~\ref{fact:#1}}}

\newcommand{\figureref}[1]{\hyperref[fig:#1]{Figure~\ref{fig:#1}}}

\newcommand{\tableref}[1]{\hyperref[tab:#1]{Table~\ref{tab:#1}}}

\newcommand{\itemref}[1]{\hyperref[item:#1]{Item~(\ref{item:#1})}}

\newcommand{\lemmaref}[1]{\hyperref[lem:#1]{Lemma~\ref{lem:#1}}}

\newcommand{\propref}[1]{\hyperref[prop:#1]{Proposition~\ref{prop:#1}}}

\newcommand{\propositionref}[1]{\hyperref[prop:#1]{Proposition~\ref{prop:#1}}}

\newcommand{\remarkref}[1]{\hyperref[rem:#1]{Remark~\ref{rem:#1}}}

\newcommand{\sectionref}[1]{\hyperref[sec:#1]{Section~\ref{sec:#1}}}

\newcommand{\theoremref}[1]{\hyperref[thm:#1]{Theorem~\ref{thm:#1}}}

\usepackage[utf8]{inputenc}
\usepackage[T1]{fontenc}
\usepackage{kpfonts}
\usepackage{microtype}

\renewcommand{\hat}{\widehat}

\renewcommand{\leq}{\leqslant}
\renewcommand{\le}{\leqslant}
\renewcommand{\geq}{\geqslant}
\renewcommand{\ge}{\geqslant}

\usepackage{bm}

\newcommand{\ignore}[1]{}

\renewcommand{\epsilon}{\varepsilon}

\newcommand{\remove}[1]{}

\usepackage{color-edits}
\addauthor{hui}{cyan}

\usepackage{bm}
\usepackage{amsfonts}
\usepackage[utf8]{inputenc} 
\usepackage[T1]{fontenc}    
\usepackage{hyperref}       
\usepackage{url}            
\usepackage{booktabs}       
\usepackage{amsfonts}       
\usepackage{nicefrac}       
\usepackage{microtype}      
\usepackage{xcolor}         

\usepackage{algorithm}
\usepackage{algpseudocode}
\usepackage{diagbox}
\usepackage{makecell}
\usepackage{booktabs}
\usepackage{tikz}
\usepackage{amsmath}
\allowdisplaybreaks
\usepackage{amssymb}
\usepackage{amsthm}
\usepackage{url,ifthen}
\usepackage{enumitem}
\usepackage{srcltx}
\usepackage{multirow}
\usepackage{boxedminipage}
\usepackage{nicefrac}
\usepackage{xspace}
\usepackage{graphicx}
\usepackage{color}
\usepackage{colortbl}
\usepackage{setspace}
\usepackage{soul}
\usepackage{graphicx}
\usepackage{wrapfig}
\usepackage{subcaption}
\usepackage{enumitem}

\usepackage{cleveref}
\definecolor{DarkGreen}{rgb}{0.1,0.5,0.1}
\definecolor{DarkRed}{rgb}{0.5,0.1,0.1}
\definecolor{DarkBlue}{rgb}{0.1,0.1,0.5}
\definecolor{Gray}{rgb}{0.2,0.2,0.2}

\title{Whom to Query for What: Adaptive Group Elicitation via Multi-Turn LLM Interactions}

\usepackage{authblk}

\author[1]{Ruomeng Ding\thanks{Equal contribution.}}
\author[1]{Tianwei Gao\textsuperscript{*}}
\author[2]{Thomas P. Zollo}
\author[3]{Eitan Bachmat}
\author[2]{\\Richard Zemel}
\author[1]{Zhun Deng\thanks{Contact: zhundeng@cs.unc.edu}}

\affil[1]{University of North Carolina at Chapel Hill}
\affil[2]{Columbia University}
\affil[3]{Ben-Gurion University of the Negev}

\date{}
\begin{document}

\maketitle

\begin{abstract}
Eliciting information to reduce uncertainty about latent group-level properties from surveys and other collective assessments requires allocating limited questioning effort under real costs and missing data.
Although large language models enable adaptive, multi-turn interactions in natural language, most existing elicitation methods optimize \emph{what to ask} with a fixed respondent pool, and do not adapt respondent selection or leverage population structure when responses are partial or incomplete.
To address this gap, we study \emph{adaptive group elicitation}, a multi-round setting where an agent adaptively selects both questions and respondents under explicit query and participation budgets.
We propose a theoretically grounded framework that combines (i) an LLM-based expected information gain objective for scoring candidate questions with (ii) heterogeneous graph neural network propagation that aggregates observed responses and participant attributes to impute missing responses and guide per-round respondent selection.
This closed-loop procedure queries a small, informative subset of individuals while inferring population-level responses via structured similarity.
Across three real-world opinion datasets, our method consistently improves population-level response prediction under constrained budgets, including a $>12\%$ relative gain on CES at a $10\%$ respondent budget.
Code is available at: \url{https://github.com/ZDCSlab/Group-Adaptive-Elicitation}.
\end{abstract}

\section{Introduction}
\renewcommand{\thefootnote}{}
\footnotetext{Published as a conference paper at ICML 2026.}
\renewcommand{\thefootnote}{\arabic{footnote}}

Surveys and other collective assessments work by asking a limited set of questions to a subset of the population of interest in order to infer latent population properties, for example, county-level political inclination, student skill profiles, or employee preferences over policy changes.
Because each additional question and each completed response incurs real cost, including respondent burden, interviewer time, and participation incentives, instruments are typically kept short, and modern survey practice increasingly emphasizes strategic effort allocation to manage the cost error tradeoff \citep{dillman1993effectsofquestionnaire, groves2006responsive, chun2018responsive}. These constraints often result in missing data and breakoffs, so downstream estimates in policy and the social sciences are frequently based on sparse, partially observed responses \citep{meyer2015housesurveys}.

\begin{figure*}[!t]
\centering
\vspace{-0.5em}
\includegraphics[width=0.9\linewidth]{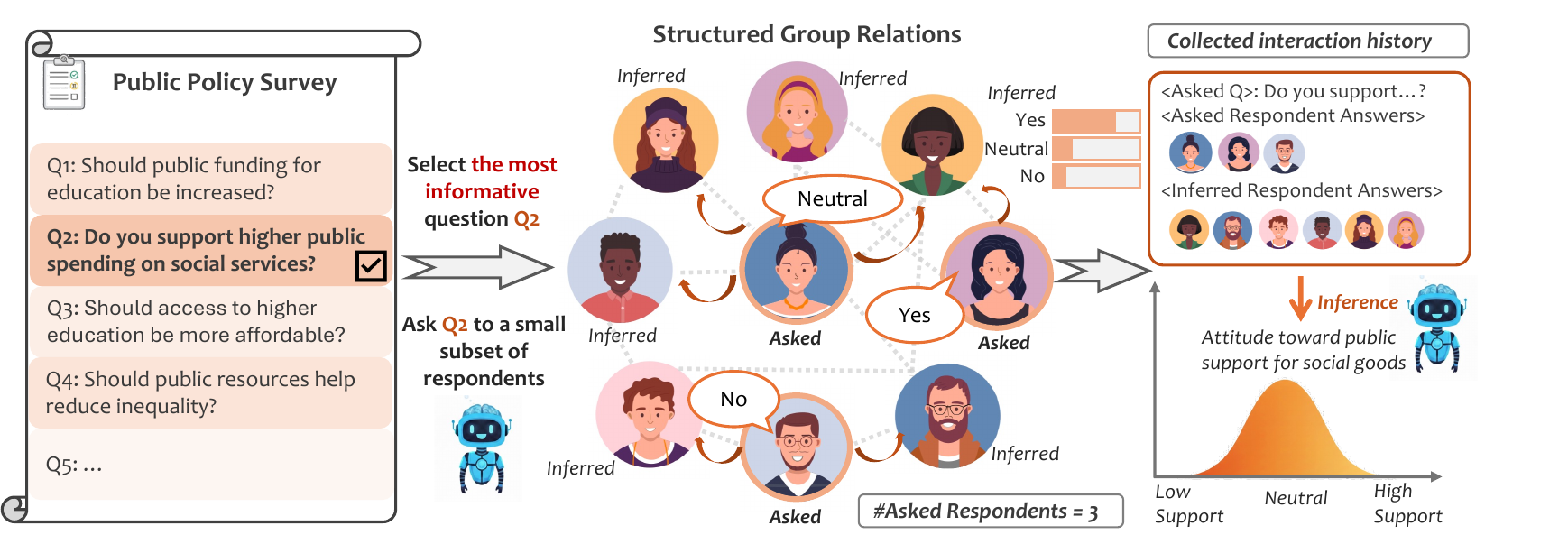}
\vspace{-0.5em}
\caption{Overview of group adaptive elicitation. The method jointly selects informative queries and leverages structured group relations, asking only a few respondents while inferring remaining responses to learn a shared latent group preference.\label{fig:teaser}}
\vspace{-1.0em}
\end{figure*}


Large language models (LLMs) offer a natural interface for making these processes adaptive: they can pose questions in natural language, condition on interaction history, and produce predictive distributions over responses. 
Recent work has begun using autoregressive models as tools for adaptive elicitation, selecting questions to maximize expected information gain from observed histories \citep{wang2025adaptive}. 
However, most existing approaches primarily optimize \emph{what to ask} under an implicit fixed respondent pool, while the dominant bottleneck in many deployments is the number of completed responses that can be collected; improving efficiency therefore also requires deciding \emph{who to ask}, and leveraging population structure (e.g., demographics and similarity across individuals) so that information gathered from a few respondents generalizes to the broader group.

To address these challenges, we study adaptive group elicitation: a multi-round interaction in which a central agent jointly selects the next question to ask and a subset of respondents to query in order to reduce uncertainty about a latent group quantity under a fixed budget. 
In an election survey, this means deciding not only which policy question to ask next, but also which voters to contact when only a small fraction can be interviewed each round. 
Our approach couples LLM-based predictive inference with population-level propagation: a meta-trained LLM scores candidate questions by expected information gain given the current interaction history, while a heterogeneous graph neural network (GNN) aggregates observed responses and demographics to impute missing responses and refine representations of similarity across respondents \citep{suh2025rethinking}. 
The resulting loop uses these graph-informed representations to select a diverse, representative respondent subset each round, and updates the graph with new observations before repeating.

\textbf{Our contribution.} We (i) formalize adaptive group elicitation as a joint decision problem over questions and respondents under query and participation budgets, (ii) propose a practical instantiation that combines an LLM-based expected-information-gain objective with heterogeneous GNN propagation for imputation and respondent selection, and (iii) provide theoretical justification via a predictive (de Finetti–style) perspective for graph-structured data and near-optimal guarantees for greedy joint selection under standard assumptions. 
We then evaluate on three real-world opinion datasets, showing consistent gains under constrained budgets (including a $>12\%$ relative improvement on CES at a $10\%$ respondent budget). 
The remainder of the paper presents the setup and framework, develops the theory, and reports experimental results and ablations.

\section{Preliminaries}
We begin by introducing several key components that provide the theoretical and practical foundations of our framework.

\subsection{De Finetti's Theorems and Predictive Inference}
The theoretical foundation of our approach builds on de Finetti's predictive perspective, which connects uncertainty quantification over a latent entity with forecasting the future behavior of the induced observables.
In survey-style applications such as estimating a group’s political inclination, the latent entity would represent an underlying preference state, and predictive uncertainty corresponds to uncertainty about how the population will respond to future questions. 
Under this perspective, the latent entity is accessed only through the predictive distribution it induces; accordingly, throughout our framework and experiments we treat reductions in predictive entropy over future observations as reductions in uncertainty about the latent entity.

Specifically, a generalized form of de Finetti's theorem (see Fact~\ref{fact:definetti}) implies that for an infinite sequence of random variables $\{Y_t\}_{t=1}^{\infty}$, the one-step-ahead predictive distribution
$p_{t}(y|Y_{1:t}):=p(Y_{t+1}=y \mid Y_{1:t})$
can be understood as arising from an underlying latent entity $U$ that governs future behavioral patterns. Through this lens, all uncertainty about future observations is induced by uncertainty about $U$: there exists a latent space $\mathcal{U}$ and a measure $\mu$ such that
\begin{equation*}
    p(Y_{1:T}) = \int_{\mathcal{U}}\prod_{t=1}^{T}p(Y_{t}|U)\mu (dU). 
\end{equation*}
Building on this perspective, one can apply a recursive, autoregressive update of beliefs about $U$ as new observations arrive, yielding a principled notion of predictive uncertainty and supporting adaptive elicitation at the individual level.

\begin{fact}[Informal, Generalized De Finetti's Theorem~\citep{berti2004limit,Fong2024martingale}]
\label{fact:definetti}
    For a sequence of random variables $\{Y_{t}\}_{t=1}^{\infty}$ taking values in a Polish space $E$, under a certain martingale condition on $\{p_{t}\}_{t=1}^{\infty}$, $p_{t}\to p_{\infty}$ for some limiting measure $p_{\infty}$ and we can recover the latent entity $U$ from the limiting measure $p_{\infty}$. 
\end{fact}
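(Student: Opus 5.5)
The plan is to treat the predictive sequence $\{p_t\}_{t\ge1}$ as a measure-valued martingale and obtain the limit $p_\infty$ from martingale convergence; then build $U$ as a measurable function of $p_\infty$ and check that the $Y_t$ are conditionally i.i.d.\ given it. The martingale condition I would impose (following \citet{berti2004limit,Fong2024martingale}) is conditional identical distribution:
\[
\mathbb{E}[p_{t+1}(A)\mid Y_{1:t}] = p_t(A)
\]
for every Borel $A\subseteq E$, which is equivalent to $P(Y_{t+k}\in A\mid Y_{1:t}) = p_t(A)$ for all $k\ge1$.

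\textbf{Step 1 (setwise convergence).} Fix a Borel set $A$. The process $\{p_t(A)\}_t$ is a bounded martingale with respect to the filtration $\mathcal{F}_t=\sigma(Y_{1:t})$, so Doob's theorem gives a limit $p_\infty(A)$ almost surely and in $L^1$.

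\textbf{Step 2 (weak convergence of measures).} Since $E$ is Polish, take a countable convergence-determining algebra $\mathcal{A}$ generating the Borel $\sigma$-field, for instance finite unions of balls with rational radii around points of a countable dense set. Intersecting countably many full-measure events, we get a single null set off which $p_t(A)\to p_\infty(A)$ for every $A\in\mathcal{A}$. To show that $p_\infty$ extends to a genuine probability measure, I need tightness of $\{p_t\}$. I would get it from the predictive structure: $\mathbb{E}[p_t]=\mathcal{L}(Y_1)$, which is tight. A Prokhorov argument applied to the random measures, using $\mathbb{E}[p_t(K^c)]=P(Y_1\notin K)$ together with a Borel--Cantelli or maximal-inequality argument over a nested sequence of compacts, then gives tightness almost surely. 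Hence $p_t\Rightarrow p_\infty$ weakly almost surely. I expect this to be the main technical obstacle; the alternative is to invoke the known result directly.

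\textbf{Step 3 (recovering $U$).} Set $U:=p_\infty$, a random element of $\mathcal{U}=\mathcal{P}(E)$, and let $\mu$ be its law. I then need to show that
\[
P(Y_{n+1}\in A_1,\dots,Y_{n+k}\in A_k\mid \mathcal{F}_n,U)=\prod_i U(A_i).
\]
Iterating the martingale identity gives $P(Y_{n+k}\in A\mid\mathcal{F}_n)=p_n(A)$, and letting $k\to\infty$ with a tail/Hewitt--Savage-type argument identifies $p_\infty$ as the directing measure. By Kallenberg's characterization, conditionally identically distributed sequences are asymptotically exchangeable, with directing measure equal to $p_\infty$. Integrating out $U$ then yields the mixture representation $p(Y_{1:T})=\int\prod_t p(Y_t\mid U)\,\mu(dU)$ displayed above, at least for the tail of the sequence. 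Under exchangeability it holds exactly, which recovers classical de Finetti.

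Since the Fact is informal, I would state the hypothesis precisely as conditional identical distribution, and cite the cited works for Step 2 and the Kallenberg step rather than reproving them.
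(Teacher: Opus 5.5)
Your route differs from the paper's, and its convergence part is sound. The paper works only with real-valued $Y_t$ and never proves convergence: it quotes the Berti--Pratelli--Rigo characterization (c.i.d.\ iff the predictive CDFs are martingales) and Theorem~\ref{thm:convergence} for the a.s.\ weak limit. Remark~\ref{rm:martingale} then proves the \emph{converse} direction: a model with $Y_t$ i.i.d.\ given a latent $U$ satisfies the martingale condition. Finally, the paper ``recovers'' $U$ through frequentist consistency of a concrete predictive rule. Lemma~\ref{lm:martingale1} says martingale predictives are copula updates, and Theorem~\ref{thm:consistency} says the Gaussian-copula update is Hellinger-consistent for i.i.d.\ data from $f_0$, with $f_0$ identified with $\mathbb{P}(\cdot\mid U)$.

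You instead prove convergence directly on a Polish space: Doob set by set, then a.s.\ tightness from $\mathbb{E}[p_t]=\mathcal{L}(Y_1)$, via the maximal inequality for the nonnegative martingales $p_t(K_m^c)$ and Borel--Cantelli, then Prokhorov. You recover $U$ intrinsically as the directing measure $U:=p_\infty$. Your route is more general, self-contained, and needs no conditions on an update rule. The paper's route makes a statement about a computable predictor run against an i.i.d.\ truth. That is what it needs to motivate meta-training a predictive model, and an argument about the true predictive distributions of the process does not address it.

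Step~3 needs two corrections. First, the identity $\mathbb{P}(Y_{n+1}\in A_1,\dots,Y_{n+k}\in A_k\mid\mathcal{F}_n,U)=\prod_i U(A_i)$ is false for non-exchangeable c.i.d.\ sequences. So is the claim that the mixture holds ``for the tail'', and a Hewitt--Savage argument needs exchangeability. A counterexample: take a two-colour urn started at $(1,1)$ whose first draw is returned without reinforcement and whose later draws add one ball of the drawn colour. This sequence is c.i.d., and $p_\infty(\text{red})$ is uniform and independent of $Y_1$, so $\mathbb{P}(Y_1=\text{red}\mid p_\infty)=1/2$. If the reinforcement instead alternates between $1$ and $2$ forever, no tail is exchangeable. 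What Kallenberg's theorem actually gives is that $(Y_{n+1},Y_{n+2},\dots)$ converges in law, as $n\to\infty$, to an exchangeable sequence directed by $p_\infty$. The exact representation holds only under exchangeability, which is the latent-variable case the paper really uses.

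Second, defining $U:=p_\infty$ is not the same as recovering a pre-existing $U$. Add Doob's consistency step: if the $Y_t$ are i.i.d.\ given $U$, then $p_t=\mathbb{E}[\mathbb{P}(\cdot\mid U)\mid\mathcal{F}_t]\to\mathbb{E}[\mathbb{P}(\cdot\mid U)\mid\mathcal{F}_\infty]=\mathbb{P}(\cdot\mid U)$ a.s. The last equality holds because $\mathbb{P}(\cdot\mid U)$ is the a.s.\ limit of the empirical measures, hence $\mathcal{F}_\infty$-measurable. This is the exact-Bayes counterpart of Theorem~\ref{thm:consistency}, and it recovers $U$ up to its induced law, which is all the paper claims.

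A smaller point in Step~2: limits of $p_t(A)$ over a fixed algebra of rational balls, plus tightness, do not immediately identify the subsequential weak limits. The portmanteau theorem links $\lim_t p_t(A)$ to $q(A)$ only for $q$-continuity sets, and the boundaries of your fixed balls can carry mass under the random limit $q$. Instead, run Step~1 on a countable measure-determining family of bounded Lipschitz functions $f_j$. Under c.i.d., $\int f_j\,dp_t=\mathbb{E}[f_j(Y_{t+1})\mid\mathcal{F}_t]$ is again a bounded martingale, and your tightness argument then closes the step.
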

We further discuss this assumption and justify Fact~\ref{fact:definetti} in Section~\ref{sec:theory} and Section~\ref{appendix:theory}.

\subsection{Heterogeneous GNN for Group Simulation}

Modeling relational structure in group settings is challenging due to partial observations and complex dependencies among individuals. Heterogeneous graph neural networks \citep{schlichtkrull2018modeling, battaglia2018relationalinductivebiasesdeep, kipf2017semisupervisedclassificationgraphconvolutional, scarselli2009gnn} provide a flexible framework by capturing multiple node types and diverse relational patterns. Through message passing, a heterogeneous GNN propagates information across node and relation types, enabling observed attributes and responses to jointly inform respondent representations and predictions, which is particularly valuable under sparse observations. Relevant to our setting, \citet{suh2025rethinking} cast discrete-choice human simulation as link prediction on a heterogeneous graph with three node types: individuals, subgroups (defined by individual features such as demographic attribute bins), and choices (each choice corresponds to a $(\text{question}, \text{option})$ pair). Individuals connect to subgroups via membership edges and to choices via response edges indicating which option they selected; the model predicts responses by scoring an individual against the candidate choice nodes for a given question. Notably, their GNN model can match or even surpass strong LLM baselines despite being three orders of magnitude smaller in parameter count, highlighting that heterogeneous GNNs can be highly effective for prediction and imputation under sparse group observations.

\subsection{Notation}\label{subsec:notations}  For clarity, we introduce generic notation that applies across all application settings. Let $G=(\mathcal{V},\mathcal{E})$ denote a graph with node set $\mathcal{V}$ and edge set $\mathcal{E}$. For a node $v \in \mathcal{V}$, $N(v)$ represents the set of its neighbors, and $S(v) = \mathcal{V} \setminus (N(v) \cup \{v\})$ is the set of nodes not adjacent to $v$. We let $\mathcal{U}$ denote a generic space of latent entities that are the target of inference, $\mathcal{X}$ a set of candidate queries, and $\mathcal{Y}$ the space of possible responses. When $\mathcal{V}$ represents a group of members, for any subset of members $V\subset\mathcal{V}$ and a query $x\in \mathcal{X}$, we denote by $Y^{V}$ the vector of responses from members in $V$. 

For a random variable $Z$, we define its entropy to be  $H(Z)= -\int_{z}p(z)\log p(z)dz,$ where $p$ is the probability density function of $Z$. Accordingly, we can follow the standard definition of conditional entropy and obtain  the conditional entropy of $Z_{1}$ with respect to $Z_{2}$ as $H(Z_{1}|Z_{2})= -\int_{z_{2}}\int_{z_{1}}p(z_{1},z_{2})\log p(z_{1}|z_{2})dz_{1}dz_{2}$.

\section{Group Adaptive Elicitation Framework}\label{sec:method}
As discussed in the introduction, in many real world applications the goal is to uncover group latent properties through limited interactions. The resulting data is typically sparse, both in the number of queries posed and in the responses obtained. For instance, political surveys can administer only a small set of key questions under constraints of time, cost, and privacy, and responses are often collected from only a subset of voters due to budget limits and nonparticipation.

\textbf{Overview of our mechanism.}  To address limited query budgets and missing responses in the interactive scenarios described above, we propose a framework that iteratively reduces group level uncertainty by combining individual predictive modeling with relational imputation. At each iteration, the framework adaptively selects both questions and a subset of group members based on the interaction history, inferred latent entities, and learned relational structure. This joint selection enables informative queries to be directed towards representative respondents to maximize expected information gain about the underlying group property.

Our approach is guided by two principles: 
\ding{182} quantifying uncertainty for each group member by aggregating interaction history, including imputed responses, and \ding{183} leveraging the group’s relational structure to share information and recover missing data. 
Together, these principles give rise to a two-stage process at each interaction round:
\begin{itemize}[leftmargin=*]
  \item[1.] \textbf{Individual uncertainty quantification:} We maintain a predictive model for each group member with latent entity $U$ given the interaction history $\mathcal{H}_{t-1}$. Our overall uncertainty about the group is computed as  the sum of individual uncertainties. This is achieved by meta-training an LLM on interaction history to accurately predict individual responses.

   \item[2.] \textbf{Adaptive group interaction strategy:} At each round $t$, we select the next question $X_t\in \mathcal{X}$ and a subgroup of respondents $R_t \subset \mathcal{V}$ to query. The selection aims to (approximately) maximize the reduction in overall group uncertainty. Crucially, after observing new responses, we leverage a heterogeneous GNN that propagates information across the group's relational graph. This allows us to impute responses from group members not being queried, making the aggregation of individual uncertainties increasingly informed and efficient. We provide theoretical justification for our elicitation strategy in Section~\ref{sec:theory}.
\end{itemize}

The proposed formulation enables computationally tractable adaptive group elicitation by combining individual level inference with relational reasoning to reduce group level uncertainty under sparse query responses.
Below, we formally present our framework for efficiently characterizing group latent entities.

\subsection{Meta-training } 
In the meta-training phase, our framework includes two components: a predictive language model for quantifying individual uncertainty and a heterogeneous GNN for leveraging relational structure. The language model predicts individual responses from interaction history and predefined features determined by the dataset, which may include demographic information in political surveys or other relevant attributes depending on the application context, while the GNN propagates information across members to impute missing answers and reduce uncertainty about the group latent entity.

\textbf{Training data.} To maintain consistency with the notation used in our later GNN formulation, we denote a group of $n$ members as $\mathcal{V} = \{v_{i}\}_{i=1}^n$, where $i$ indexes group members. By observing query-response sequences over $T$ interaction rounds, we obtain a training dataset
$\mathcal{D}_{\text{train}} = \{(X^{v_{i}}_{1:T}, Y^{v_{i}}_{1:T})\}_{i=1}^{n}$,
where $(X^{v_{i}}_{1:T}, Y^{v_{i}}_{1:T})$ denotes the query-response sequence for the $i$-th group member $v_{i}$.

\textbf{LLM for prediction inference.}
We first meta-train a language model to perform predictive inference at the individual level, avoiding the need for a parametric prior over the complex latent space $\mathcal{U}$, as in \citep{Ye2024exchangeable, wang2025adaptive}. We finetune the parameters $\theta$ of an LLM to maximize the likelihood of autoregressive prediction:
\begin{equation}
\hat{\theta} = \arg\max_{\theta} \sum_{i=1}^{n} \sum_{t=1}^{T-1} \log p_{\theta}(Y^{v_{i}}_{t+1} | X^{v_{i}}_{t+1}, \mathcal{H}^{v_{i}}_t)
\label{eq:pretrain_obj}
\end{equation}
where $\mathcal{H}^{v}$ denotes the interaction history 
collected from group member $v$. As in  \citep{Ye2024exchangeable, wang2025adaptive}, the above objective trains the LLM to serve as a robust predictor of individual group member behavior, enabling reliable uncertainty quantification and determining the informativeness of queries with respect to the latent entity given each group member's interaction history. This predictive capability allows the LLM to estimate the expected information gain of candidate queries, which is essential for guiding adaptive query selection in our framework.

\textbf{Heterogeneous GNN for group simulation.} 
The LLM's predictive inference benefits from complete interaction histories for each group member; however, in our setting the observed query-response data can be sparse in both the number of queries and the subset of respondents.
To impute missing responses and thereby bolster the LLM's ability to characterize uncertainty over the latent entity, we employ a heterogeneous GNN.
The GNN is well suited for this setting, as it propagates information across the group to perform robust imputation that compensates for partial observations.

Inspired by the approach in \citep{suh2025rethinking}, we model the problem of response prediction for group members as link prediction in a heterogeneous graph with relational GNN. Specifically, we construct an extended version of graph $\mathcal{G}_{\text{heter}}=(\mathcal{V}_{\text{heter}},\mathcal{E}_{\text{heter}})$ that contains three types of nodes corresponding to group members $\mathcal{V}$, attributes (e.g. demographic features) $\mathcal{V}_f$, and choices $\mathcal{V}_c$. The edge set $\mathcal{E}_{\text{heter}}=\mathcal{E}_{f}\cup \mathcal{E}_{c}$ contains two subsets of edges of different types, the edge set $\mathcal{E}_{f}$ connects members to their features and response edge set $\mathcal{E}_{c}$ connects members to their selected choices.
 
 Specifically, based on the training dataset $\mathcal{D}_{\text{train}}$ and query set $\mathcal{X}$, we construct the graph as follows. The node subsets and the edges between different types of nodes are given by: \ding{182} \textit{Group member nodes }$\mathcal{V}$ : We associate each group member in $\mathcal{V}$ with a node in $\mathcal{V}_{\text{heter}}$ and identify the group $\mathcal{V}$ in the training dataset with the associated node subset in $\mathcal{G}_{\text{heter}}$. \ding{183} \textit{Feature nodes}: Each node represents an individual feature. For instance, in a political survey, we discretize demographic features such as age, gender and education level into categorical bins, such as "Age: 18-29", each bin corresponds to a feature node. A member node connects to all matching feature nodes. \ding{184} \textit{Query-choice nodes} $\mathcal{V}_c$: Each node represents a unique (query $x$, choice $c$) pair.  A member is connected to all the query-choice nodes (query $x$, choice $c$) such that it selects choice $c$ for query $x$. 

The GNN is trained to learn the feature and query-choice nodes embeddings and relation-specific message passing mechanism using link prediction objective.
Letting $d$ be the embedding dimension, both the feature and query-choice nodes embeddings are initiated with $d$-dimensional learnable embeddings and the embeddings of member nodes are initiated with uniform vector $e:=(1,\ldots,1)\in \mathbb{R}^{d}$. We construct the training objective as follows. First, we randomly mask a subset of member-choice edges $\mathcal{E}_{\text{mask}}$. Then for each member $v$ and any query $q$ with its set of choices $\mathcal{C}_q$, the probability that $v$ selects choice $c \in \mathcal{C}_q$ is:
\begin{equation}\label{eq:gnn}
p(c \mid v, q) = \frac{\exp(\langle {h}_v, {h}_c \rangle / \tau)}{\sum_{c' \in \mathcal{C}_q} \exp(\langle {h}_v, {h}_{c'} \rangle / \tau)}, 
\end{equation}

where ${h}_v$ and ${h}_c$ are the final-layer embedding vectors obtained by the GNN using message passing on the partially masked graph, $\langle \cdot, \cdot \rangle$ is the dot product, and $\tau$ is a learnable temperature parameter. The training optimization objective is to minimize the cross entropy loss for the masked edges:
\[
\min -\sum_{(v,c) \in \mathcal{E}_{\text{mask}}} \log \, p(c \mid v, q(c)).
\]
where $q(c)$ is the query corresponding to choice $c$. 

\begin{figure}[!t]
\centering
\begin{subfigure}[b]{0.52\linewidth}
    \centering
    \includegraphics[width=\linewidth]{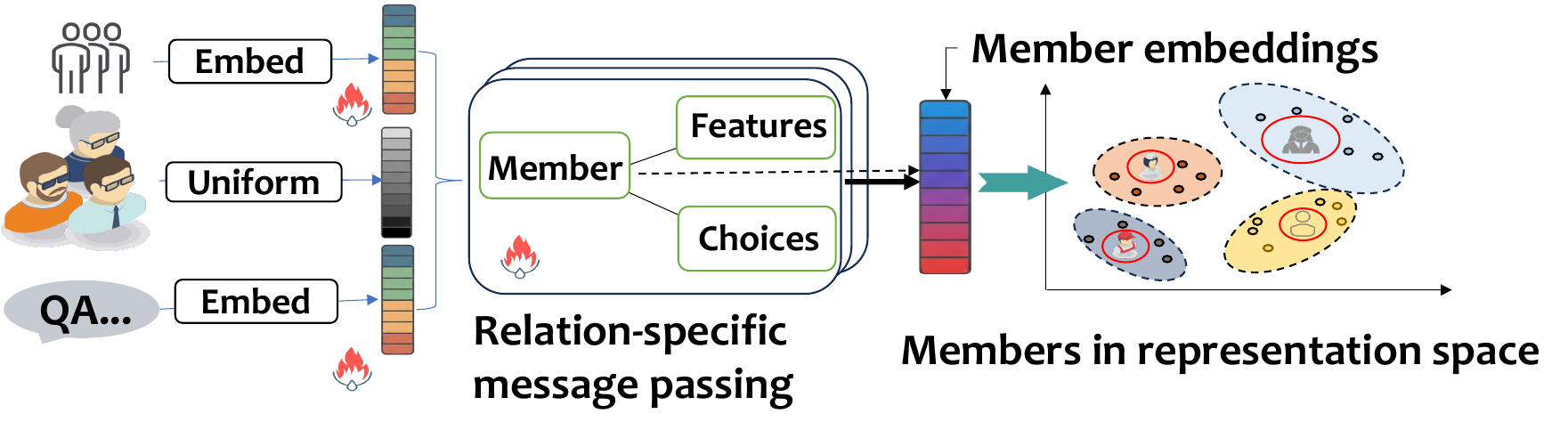}
    \caption{Embedding-based clustering and subgroup selection}
\end{subfigure}
\hfill
\begin{subfigure}[b]{0.46\linewidth}
    \centering
    \includegraphics[width=\linewidth]{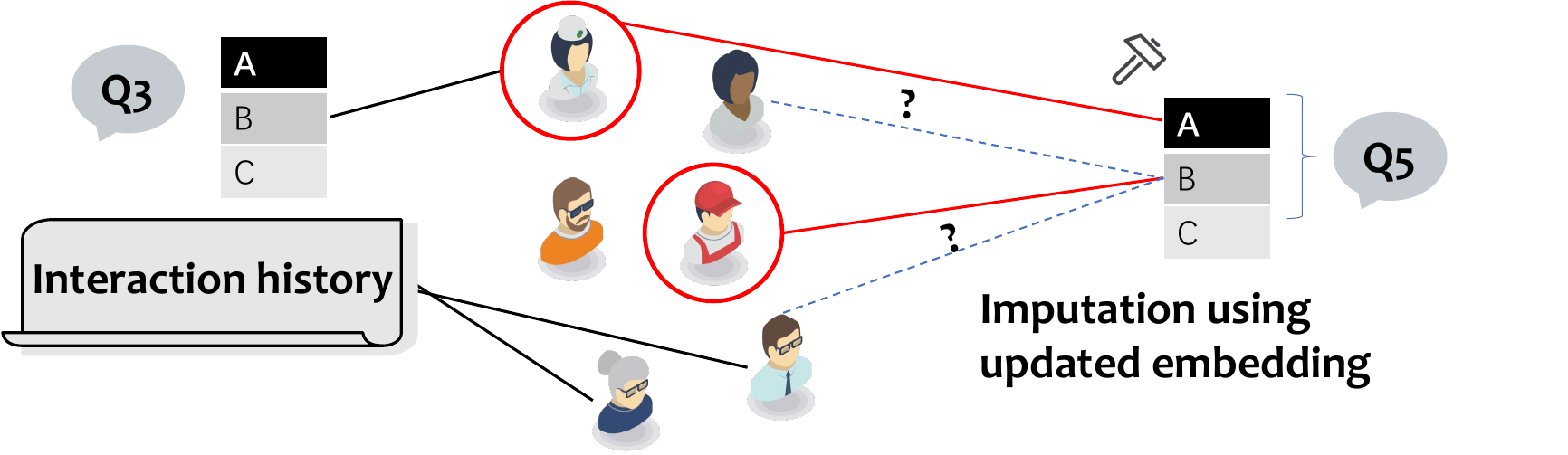}
    \caption{Imputation using updated embedding}
\end{subfigure}
\caption{Overview of heterogeneous GNN-based subgroup selection and imputation.}
\label{fig:gnn_teaser}
\end{figure}

\subsection{Test-Time Adaptive Inference}
\label{subsec:testing_inference}
At test time, we are given a new group of members $\mathcal{V}^{\text{test}}$, a candidate query set $\mathcal{X}_{c}\subset \mathcal{X}$, and a held-out evaluation set $\mathcal{X}_{h}\subset \mathcal{X}\backslash \mathcal{X}_{c}$. Responses to questions in $\mathcal{X}_{h}$ serve as proxies for the latent entity of interest. We construct the heterogeneous graph for the testing group $\mathcal{V}^{test}$ 
following the same procedure as in the training phase with the same set of query-choice nodes and feature nodes as defined during training. Initially, when no query-response data is observed, we connect the test group members to feature nodes based on their demographic attributes. We initialize feature and query choice nodes with learned embeddings and member nodes uniformly. By integrating the meta-trained LLM and heterogeneous GNN, we iteratively select questions and respondents to reduce uncertainty about the group latent entity. Performance is evaluated by prediction accuracy on held-out queries in $\mathcal{X}_{h}$ across all members, enabling robust inference under sparse interactions. 

\textbf{Adaptive query selection.} At each round $t$, given the interaction history $\mathcal{H}_{t-1}$, we select both a question $X_t$ and a subset of respondents $R_t \subset \mathcal{V}^{\text{test}}$ that maximize the expected information gain (EIG) about the group's latent entity. First, for a member $v$, to quantify uncertainty about the latent entity $U_v$, we adopt de Finetti's predictive perspective in \citep{wang2025adaptive}, which uses the conditional entropy over future observations as a proxy. Specifically, we define the uncertainty as: 
\begin{equation}
    H(U_{v}|\mathcal{H}_{t-1}^{v}) := \sum_{x\in \mathcal{X}_{h}}H(Y_{x}^{v}|X=x,\mathcal{H}_{t-1}^{v}),
\end{equation}
where the entropy is computed over the held-out evaluation set $\mathcal{X}_{h}$. This provides a practical measure of the confidence with which we can predict the group member $v$'s responses to diagnostic questions. We then compute EIG using the meta-trained LLM $p_{\hat{\theta}}$ as the sum of individual-level information gains:
\begin{equation}
    \operatorname{EIG}(x;\mathcal{H}_{t-1})=\sum_{v\in \mathcal{V}^{\text{test}}}\big( H(U_{v}|\mathcal{H}_{t-1}^{v}) - \mathbb{E}_{Y^{v}_{t}}[H(U_{v}|\hat{\mathcal{H}}_{t}^{v})] \big),
\end{equation}
where $\hat{Y}^{v}_{t}\sim {p}_{\hat{\theta}}(Y_{t}^{v}|X_{t}=x,\mathcal{H}_{t-1}^{v})$ is the simulated response of member $v$ to query $x$, and $\hat{\mathcal{H}}_{t}^{v}=\mathcal{H}_{t-1}^{v}\cup \{(x,\hat{Y}_{t}^{v})\}$ is the updated history. EIG is a widely used criterion that assesses how much a query-response pair reduces uncertainty about the latent entity, and thus we select the query that maximizes the group EIG: 
\begin{equation}
    X_{t} = \arg\max_{x\in \mathcal{X}_{c} \setminus \{X_{i}\}_{i=1}^{t-1}} \text{EIG}(x;\mathcal{H}_{t-1}).
\end{equation}

\textbf{Group-relational respondents selection.} To select a representative subgroup of respondents, we leverage relational structure using embeddings from the heterogeneous GNN. Let $h_v$ denote the final layer embedding for member $v \in \mathcal{V}^{\text{test}}$. Since similar embeddings imply similar response patterns, we select a diverse subgroup by maximizing coverage of the embedding space. Given a budget $k$, we cluster members based on these embeddings and choose the $k$ cluster centers as the representative subgroup $R_t$.

\textbf{Message propagation for imputation and belief updating.} 
At interaction round $t$, suppose we have already selected query $X_{t}$ and subgroup of respondents $R_{t}\subset \mathcal{V}$. After collecting actual responses $Y^{R_{t}}$ for the selected query $X_t$ from the chosen subgroup of respondents $R_t$, we add new edges between group member nodes and query-choice nodes corresponding to the observation in the heterogeneous graph. We then perform message passing using the pre-trained GNN over the updated graph structure to compute refined node embeddings. These updated embeddings enable more accurate imputation of unobserved user responses to query $X_{t}$. For a group member node $v\notin V_{t}$, the prediction of the unobserved response $Y_{t}^{v}$ is obtained using Eq.~\eqref{eq:gnn}. The imputed predictions for unobserved responses are added into the interaction history to inform the next round of adaptive selection. Specifically, we set $\mathcal{H}_{t}=\mathcal{H}_{t-1}\cup\{(X_{t},\hat{Y}_{t}^{\mathcal{V}^{\text{test}}})\}$, where $\hat{Y}_{t}^{\mathcal{V}_{m}^{\text{test}}}$ is the concatenation of observation $Y_{t}^{R_t}$ for the subgroup of queried members and the GNN imputation $Y_{t}^{\mathcal{V}-R_t}$ for unqueried members. Then we add the edges corresponding to $\hat{\mathcal{Y}}^{\mathcal{V}^{test}}$ into the heterogeneous graph and perform message passing to update node embeddings. This enrichment of the graph structure allows the GNN to propagate the newly acquired information throughout the network. Consequently, the integration of both collected and imputed responses effectively reduces uncertainty about the group's latent entity. This is achieved through two mechanisms: the updated node embeddings enhance the accuracy of future GNN-based imputations, while the more complete individual-level interaction history improves the LLM's uncertainty quantification for subsequent adaptive query selections.

\section{Theoretical Results}\label{sec:theory}
We additionally establish theoretical justification for the proposed approach. First, we prove that our two-stage selection algorithm achieves near-optimality under submodularity assumptions. Second, we provide justification for  training objective in Eq.~(\ref{eq:pretrain_obj}) with  Fact~\ref{fact:definetti} in our framework.

\textbf{Near-optimal guarantee for greedy selection.} The adaptive elicitation strategy in Section~\ref{subsec:testing_inference} requires solving a sequential decision problem: at each round, we select a pair $(R_t, x_t)$ (a subgroup of respondents and a query) to maximize the cumulative information gain over a horizon $T$. However, finding the optimal sequence $\{(R_t^*, x_t^*)\}_{t=1}^T$ is computationally intractable due to its combinatorial complexity over both the user set $\mathcal{V}$ and the query space $\mathcal{X}$ as shown in  \citep{krause08sensor}. We therefore adopt a two stage greedy algorithm in Section~\ref{subsec:testing_inference} that selects the pair $(R_t, x_t)$ at each step to maximize immediate information gain given the current interaction history. To justify this approach, we show that under standard submodularity and monotonicity assumptions and certain alignment assumption, both the joint and two stage greedy strategies achieve near optimal solutions, provably within a constant factor of the optimum.

 Let $f: 2^{\mathcal{V} \times \mathcal{X}} \to \mathbb{R}_{\geq 0}$ be a utility function that measures the information gained about the group's latent properties, such as the expected information gain (EIG) defined in Section \ref{subsec:testing_inference}. The marginal gain of adding a new subset $A \subset \mathcal{V} \times \mathcal{X}$ to history $S$ is defined as $\Delta(A \mid S) = f(S \cup A) - f(S)$. We now prove the following two theorems on near-optimality of greedy algorithms.

\begin{theorem}[Near optimality of joint greedy selection]
\label{thm:submodular}
Under Assumption~\ref{as:submodular}, let $\{(R_t, x_t)\}_{t=1}^T$ be the sequence selected by the greedy algorithm, where at each step:
\begin{equation}
(R_t, x_t) = \arg\max_{\substack{R \subset \mathcal{V},\, |R| \leq k \\ x \in \mathcal{X} \setminus \{x_1, \dots, x_{t-1}\}}} \Delta(R\times\{x\} \mid S_{t-1}),
\end{equation}
with $S_0 = \emptyset$ and $S_t = S_{t-1} \cup (R_t\times\{x_t\})$. If $\{(R_t^*, x_t^*)\}_{t=1}^T$ is the optimal sequence, then there exists a constant $C_{1}$ independent of $k$ and $T$ such that the inequality below holds:
\[
f\big(\cup_{t=1}^T(R_t^*\times\{ x_t^*\})\big) \leq C_{1} \cdot f\big(\cup_{t=1}^T(R_t\times\{x_t\})\big).
\]
\end{theorem}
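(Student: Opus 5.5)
Assumption~\ref{as:submodular} has not been displayed, so I will assume it gives that $f$ is monotone and submodular on $2^{\mathcal{V}\times\mathcal{X}}$ with $f(\emptyset)=0$. The plan is to reduce the statement to the classical $(1-1/e)$ analysis of greedy maximization, in the style of Nemhauser--Wolsey--Fisher. The twist is that each greedy "element" is a block $R\times\{x\}$, with $|R|\le k$ and the queries pairwise distinct. So I will work with the ground set of blocks $\mathcal{B}=\{R\times\{x\}: R\subset\mathcal{V}, |R|\le k, x\in\mathcal{X}\}$. Define $g(\{B_1,\dots,B_m\})=f(\cup_i B_i)$. This $g$ inherits monotonicity and submodularity from $f$, because a union of blocks behaves like a union of sets and $f$'s diminishing-returns property transfers to unions.

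\textbf{Step 1 (one-step progress).} Let $O=\cup_{t}(R_t^*\times\{x_t^*\})$ and $S_{t-1}$ the greedy set. By monotonicity and submodularity,
\[
f(O)\le f(S_{t-1}\cup O)\le f(S_{t-1})+\sum_{s=1}^T \Delta\big(R_s^*\times\{x_s^*\}\mid S_{t-1}\big).
\]
I then bound each summand by the greedy gain $\Delta(R_t\times\{x_t\}\mid S_{t-1})$, which gives
\[
f(O)-f(S_{t-1})\le T\big(f(S_t)-f(S_{t-1})\big).
\]

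\textbf{Main obstacle.} The bound above fails when $x_s^*$ has already been used by greedy, since such blocks are excluded from the argmax. I would handle this as follows.
\begin{itemize}
\item If $x_s^*=x_j$ for some $j<t$, write $R_s^*\times\{x_s^*\}=(R_s^*\setminus R_j)\times\{x_j\}\cup(\text{already in }S_{t-1})$. The covered part contributes zero gain.
\item The uncovered part is then charged to step $j$. By submodularity its gain at time $t$ is at most its gain at time $j$, and at time $j$ greedy preferred $R_j$.
\end{itemize}
This charging yields a weaker inequality of the form
\[
f(O)\le 2\sum_t \Delta_t + f(S_T),
\]
hence $f(O)\le 3 f(S_T)$, so a constant factor is still obtained. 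This is exactly where the constant changes from $e/(e-1)$ to some $C_1$. The alternative is to invoke the matroid-constraint greedy bound with factor $1/2$, viewing "distinct queries" as a partition-matroid constraint over blocks, which gives $C_1=2$. I would try the partition-matroid route first (Fisher--Nemhauser--Wolsey 1978), since it directly handles the distinct-query constraint. Caveat: this needs care, because blocks sharing a query partially overlap; truncating each $R_s^*$ to $R_s^*\setminus R_j$ keeps it feasible ($|R|\le k$ is preserved under subsets).

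\textbf{Step 2 (unroll).} In the clean case, iterating $f(O)-f(S_t)\le(1-1/T)(f(O)-f(S_{t-1}))$ gives $f(O)-f(S_T)\le(1-1/T)^T f(O)\le e^{-1}f(O)$, i.e. $C_1=e/(e-1)$. With the matroid adjustment I would settle for $C_1=2$ (or $3$). Either constant is independent of $k$ and $T$, as required.
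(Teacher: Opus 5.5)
Your proposal is correct. Your charging step is the paper's, but you close the argument differently, and your preferred closing gives a sharper constant.

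\textbf{The paper's route.} The paper fixes $i$ and bounds $f(O)$ by $f(S_i)$ plus the increments from adding the optimal pairs one at a time on top of $S_i$. It then splits these increments exactly as you do.
If $x_j^*=x_{\sigma_i(j)}$ was already used, then $R_j^*\times\{x_j^*\}$ was a feasible candidate at step $\sigma_i(j)$. Submodularity and the greedy choice bound its increment by $f(S_{\sigma_i(j)})-f(S_{\sigma_i(j)-1})$. Injectivity of $\sigma_i$, which follows from the optimal queries being distinct and which you should state explicitly, sums these to at most $f(S_i)$. No truncation to $R_s^*\setminus R_j$ is needed, because $S_{j-1}\subseteq S_{t-1}$ already gives the comparison.
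Otherwise the increment is at most $f(S_{i+1})-f(S_i)$. This gives $f(O)\le 2f(S_i)+T\,[f(S_{i+1})-f(S_i)]$. The paper unrolls it via $\delta_i=f(O)-2f(S_i)$ and $\delta_{i+1}\le(1-2/T)\delta_i$, obtaining $C_1=2/(1-e^{-2})\approx 2.31$ (using $T>2$).

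\textbf{Your fallback.} The constant $3$ is also valid, for example by averaging the same per-step inequality over $i=0,\dots,T-1$. However, you did not say how you derive it.

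\textbf{Your partition-matroid route.} This is genuinely different and better. You work only at the final set: $f(O)\le f(S_T)+\sum_s\Delta(R_s^*\times\{x_s^*\}\mid S_T)$. You then assign each optimal pair to a distinct greedy step at which it was feasible. An optimal query equal to $x_j$ goes to step $j$. The optimal queries never used by greedy go bijectively to the greedy steps whose queries are not optimal queries, and the two counts agree.
Submodularity and the greedy choice then bound the sum by $\sum_t[f(S_t)-f(S_{t-1})]=f(S_T)$. So $C_1=2$, with no recursion and no restriction on $T$. This is strictly better than the paper's constant.

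\textbf{On overlapping blocks.} Your worry here is unnecessary. The block function $g$ is monotone submodular regardless of overlap, and the exchange argument only uses feasibility of each optimal block at its assigned step.
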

\begin{theorem}[Near optimality of two-stage greedy algorithm]
\label{thm:greedy1}
Under Assumptions~\ref{as:submodular}, let $\{(R_t, x_t)\}_{t=1}^T$ be the sequence selected by the two-stage greedy algorithm below:
\begin{equation}
\begin{aligned}
    x_{t} &= \arg\max_{x \in \mathcal{X}\backslash \{x_{1},\ldots,x_{t-1}\}} \Delta((\mathcal{V}\times\{x\}) \mid S_{t-1}), 
\\ V_{t} &= \arg\max_{{R \subset \mathcal{V},\, |V| \leq k}} \Delta(R\times \{x_{t}\} \mid S_{t-1}),
\end{aligned}
\end{equation}
with $S_0 = \emptyset$ and $S_t = S_{t-1} \cup (R_t\times \{x_t\})$. If $\{(R_t^*, x_t^*)\}_{t=1}^T$ is the optimal sequence, then there exists a constant $C_{2}$ independent of $k$ and $T$ such that the inequality below holds:
\[
f\big(\cup_{t=1}^T(R_t^*\times\{x_t^*\})\big) \leq C_{2} \cdot f\big(\cup_{t=1}^T(R_t\times\{x_t\})_{t=1}^T\big).
\]
\end{theorem}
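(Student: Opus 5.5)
We prove Theorem~\ref{thm:greedy1} by reducing it to the argument for Theorem~\ref{thm:submodular}. The reduction isolates a single per-round inequality comparing the two-stage greedy gain with the joint greedy gain.

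Assumption~\ref{as:submodular} is not reproduced above. We take it to consist of three parts:
\begin{itemize}
\item \emph{Monotonicity and submodularity.} $f$ is monotone and submodular on $2^{\mathcal{V}\times\mathcal{X}}$ with $f(\emptyset)=0$.
\item \emph{Alignment.} For every history $S$, the query maximizing the full-population gain $\Delta(\mathcal{V}\times\{x\}\mid S)$ is, up to a constant factor $\gamma\in(0,1]$, also good for the best size-$k$ subgroup:
\[
\max_{|R|\le k}\Delta(R\times\{x_t\}\mid S)\;\ge\;\gamma\,\max_{x,\,|R|\le k}\Delta(R\times\{x\}\mid S).
\]
\item \emph{Exact or approximate subgroup oracle.} The inner maximization over $R$ is solved exactly or within a factor $\beta$. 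If it is only solved by an inner greedy procedure, submodularity in $R$ for fixed $x$ gives $\beta=1-1/e$.
\end{itemize}

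\textbf{Step 1 (one-step progress).} Let $O=\cup_t (R_t^*\times\{x_t^*\})$ be the optimal set. Fix round $t$ with history $S_{t-1}$.
\begin{itemize}
\item By monotonicity and submodularity,
\[
f(O)-f(S_{t-1})\le \sum_{s=1}^T \Delta(R_s^*\times\{x_s^*\}\mid S_{t-1}).
\]
\item Each term on the right is at most the best available single-round gain $g_t^*:=\max_{x,|R|\le k}\Delta(R\times\{x\}\mid S_{t-1})$.
\item One subtlety: $x_s^*$ may already have been used by the algorithm, so it is not available at round $t$. In that case $\Delta(R_s^*\times\{x_s^*\}\mid S_{t-1})$ still counts the remaining respondents on that query. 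We handle this by allowing the comparison class to include pairs $(R,x)$ with $x$ already used, or by using the standard trick that such elements contribute at most what the algorithm could have gained. We will check which form the assumption permits.
\end{itemize}
This yields $f(O)-f(S_{t-1})\le T\,g_t^*$.

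\textbf{Step 2 (two-stage gain versus $g_t^*$).} The two-stage choice first fixes $x_t$ by full-population gain and then picks $R_t$. Combining alignment with the oracle guarantee gives
\[
f(S_t)-f(S_{t-1})\;\ge\;\beta\gamma\, g_t^*\;\ge\;\frac{\beta\gamma}{T}\bigl(f(O)-f(S_{t-1})\bigr).
\]

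\textbf{Step 3 (unrolling the recursion).} The quantity $f(O)-f(S_t)$ contracts by a factor $(1-\beta\gamma/T)$ each round. After $T$ rounds,
\[
f(O)-f(S_T)\le e^{-\beta\gamma}f(O),
\]
hence $f(O)\le C_2 f(S_T)$ with
\[
C_2=\bigl(1-e^{-\beta\gamma}\bigr)^{-1}.
\]
This constant is independent of $k$ and $T$. Theorem~\ref{thm:submodular} is the special case $\gamma=1$.

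\textbf{Main obstacle.} The delicate part is Step 2, specifically justifying alignment between the query that is best for the whole population and the query that is best for a $k$-subgroup. Without an assumption of this kind, the full-population criterion can choose a query whose gain is spread thinly over many members. Any $k$ of those members would then capture only about a $k/|\mathcal{V}|$ fraction of the gain, and the constant would depend on $k$.

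If Assumption~\ref{as:submodular} does not state alignment directly, the first fallback is a homogeneity condition. Suppose
\[
\max_{|R|\le k}\Delta(R\times\{x\}\mid S)\ \ge\ \gamma\,\frac{k}{|\mathcal{V}|}\,\Delta(\mathcal{V}\times\{x\}\mid S)
\]
together with a reverse inequality of the same type. These two bounds together recover the per-round inequality of Step 2.

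A secondary issue is the used-query constraint in Step 1. It should be absorbed by submodularity, since adding further respondents to an already-used query is dominated by what the greedy rule could have gained.
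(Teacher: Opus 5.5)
Your Step~1 does not go through, and the gap is substantive: the constant $(1-e^{-\beta\gamma})^{-1}$ you derive is false under Assumption~\ref{as:submodular}.

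Part~(d) of that assumption is your alignment condition, except that the maximum is taken only over unused queries $x\in\mathcal{X}_S$. Step~2 therefore forces $g_t^*$ to range over unused queries. With that definition, $f(O)-f(S_{t-1})\le T g_t^*$ fails. An optimal block $R_s^*\times\{x_s^*\}$ whose query the algorithm already consumed, with a \emph{different} respondent set, can keep residual gain that no available pair matches. Letting the comparison class include used queries would need a strictly stronger alignment hypothesis, and the following example violates it.

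Let $k=1$ and $\mathcal{V}=\{a,b\}$. Take $m\ge 2$ disjoint gadgets with queries $y_j,z_j$ and a weighted coverage function $f$ with $\epsilon\in(0,1)$:
\begin{itemize}
\item $(a,y_j)$ covers $\{p_j,s_j\}$ and $(b,y_j)$ covers $\{q_j\}$;
\item $(a,z_j)$ and $(b,z_j)$ both cover $\{p_j\}$;
\item the weights are $w(p_j)=w(q_j)=1$ and $w(s_j)=\epsilon$.
\end{itemize}
A short case check gives $\Phi_k(x^\Psi_S\mid S)=\max_{x\in\mathcal{X}_S}\Phi_k(x\mid S)$ for every $S$ and $k$. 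So (d) holds for every $\eta\in(0,1)$, and your version holds even with $\gamma=1$.

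Take $T=2m$. Both the joint and the two-stage greedy rules spend the first $m$ rounds on $(\{a\},y_j)$, after which every available pair has zero gain. Thus $f(S_T)=m(1+\epsilon)$, while $f(O)=2m$, attained by $(\{b\},y_j),(\{a\},z_j)$. At $t=m+1$ you get $f(O)-f(S_m)=m(1-\epsilon)>0=T g^*_{m+1}$. The final ratio $2/(1+\epsilon)$ exceeds $e/(e-1)\approx 1.58$. This refutes your $C_2$, and also your claim that Theorem~\ref{thm:submodular} holds with $e/(e-1)$.

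The missing idea is to charge used-query terms to \emph{past} greedy increments rather than to the current one. Fix $i$ and write $f(O)\le f(S_i)+\sum_{j=1}^T\Delta_j$ with $\Delta_j=f(S_j^*\cup S_i)-f(S_{j-1}^*\cup S_i)$.
\begin{itemize}
\item If $x_j^*=x_\sigma$ for some $\sigma\le i$, submodularity and the inner $\arg\max$ over $R$ at round $\sigma$ give $\Delta_j\le f(S_\sigma)-f(S_{\sigma-1})$. The inner step is exact for the fixed query $x_j^*$, however that query was chosen. Since the $x_j^*$ are distinct, $j\mapsto\sigma$ is injective, so these terms total at most $f(S_i)$.
\item Every other $j$ has $x_j^*\in\mathcal{X}_{S_i}$. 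Submodularity and (d) then give $\Delta_j\le\eta^{-1}\bigl(f(S_{i+1})-f(S_i)\bigr)$.
\end{itemize}
Hence $f(O)\le 2f(S_i)+\frac{T}{\eta}\bigl(f(S_{i+1})-f(S_i)\bigr)$. So $\delta_i=f(O)-2f(S_i)$ satisfies $\delta_{i+1}\le(1-2\eta/T)\delta_i$, which yields $C_2=2/(1-e^{-2\eta})$. The factor $2$ is the price of the distinct-query constraint, and the example shows a loss of that order cannot be avoided.

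Your oracle factor $\beta$ and your homogeneity fallback are unnecessary: the inner step is an exact $\arg\max$, and (d) is assumed directly.
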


The above results extend the classic guarantee for submodular utility maximization proved in \citep{krause08sensor,wang2025adaptive} to our setting of joint user-query selection, ensuring near optimality of  greedy algorithms.

\begin{remark}[Greedy vs. Multi-step Planning]
Theorem~\ref{thm:submodular} and \ref{thm:greedy1} guarantee that the greedy algorithm is already near optimal. This is further supported by our empirical results in Section~\ref{sec:exp-multistep}, where more complex multi step planning methods provide only marginal improvements while incurring substantial computational cost. Although finite horizon planning may achieve higher information gain in principle, the greedy algorithm’s combination of tractability and provable near optimality makes it especially well-suited for our principled adaptive group elicitation framework, where efficiency is essential for real-time interaction.\label{remark:multi-step}
\end{remark}

\textbf{De Finetti's theorem and meta-training.} For individual-level query--response processes where a latent entity $U$ governs a group member's behavior, as is the cases of political survey and student assessment, Remark~\ref{rm:martingale} proves the martingale condition holds and Theorem~\ref{thm:consistency} shows,  conditional on the history $\mathcal{H}_{T}$, predictive inference can recover the latent entity $U$. Thus, the training objective of the LLM in Eq.~\eqref{eq:pretrain_obj} is designed to directly minimize the KL-divergence between the predictive distribution induced by LLM and the true data-generating process, ensuring unbiased belief updating and uncertainty quantification about latent entity $U$ similar as in \citep{wang2025adaptive}.

\section{Experiments}

Our experimental evaluation is structured around the following questions:
\begin{itemize}[leftmargin=1em, itemsep=0em, topsep=0em, parsep=0pt]
\item \textbf{RQ1:} Does adaptive group elicitation improve inference of group properties under different query budgets? $\rightarrow$ \textit{Sec.~\ref{sec:exp-overall}}
\item \textbf{RQ2:} What drives gains from group-relational respondent selection, and who benefits most? $\rightarrow$ \textit{Sec.~\ref{sec:exp-node}}
\item \textbf{RQ3:} How do query selection and GNN imputation contribute to performance? $\rightarrow$ \textit{Sec.~\ref{sec:exp-ablation}}
\item \textbf{RQ4:} How does multi-step planning compare to greedy selection in practice? $\rightarrow$ \textit{Sec.~\ref{sec:exp-multistep}}
\end{itemize}

\begin{figure*}[!t]
\centering
\begin{subfigure}{0.95\linewidth}
  \centering
  \includegraphics[width=\linewidth]{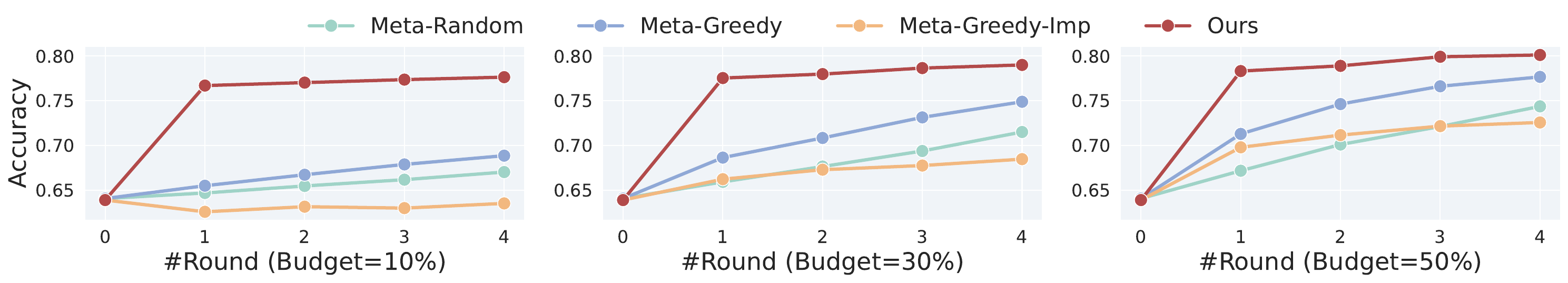}
\end{subfigure}

\begin{subfigure}{0.95\linewidth}
  \centering
  \includegraphics[width=\linewidth]{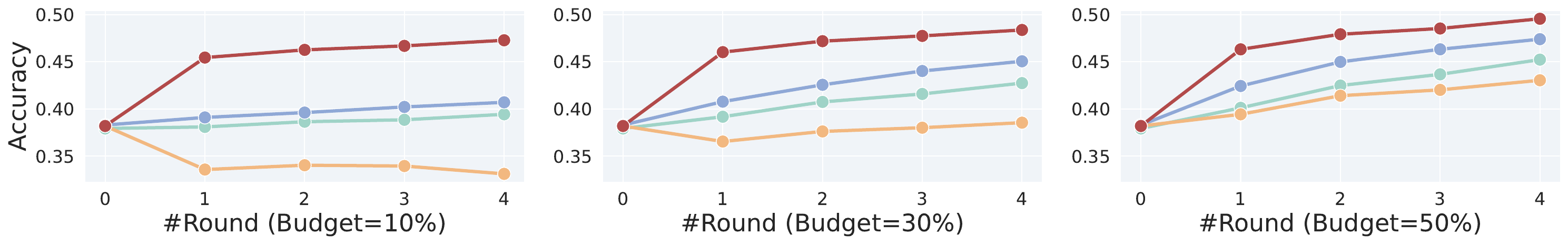}
\end{subfigure}

\begin{subfigure}{0.95\linewidth}
  \centering
  \includegraphics[width=\linewidth]{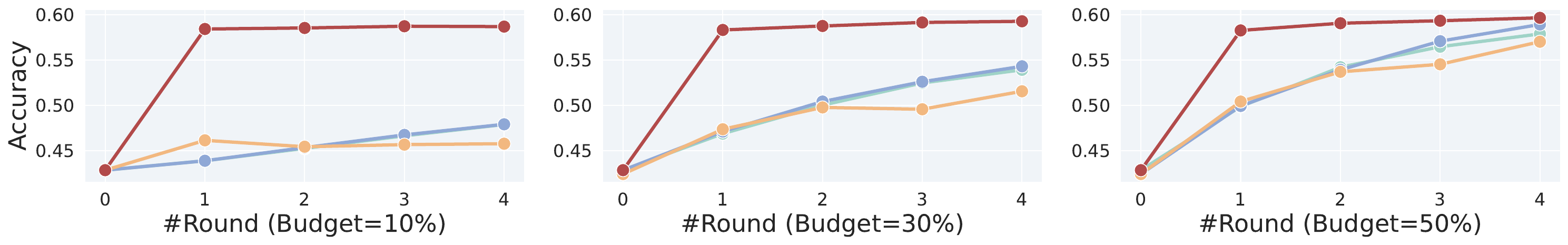}
\end{subfigure}

\caption{
Accuracy on target questions over interaction rounds (one question per round) under different respondent budgets.
\textbf{Upper}: \textsc{CES}.
\textbf{Middle}: \textsc{OpinionQA}.
\textbf{Lower}: \textsc{Twin-2k}.\label{fig:main_results}}
\end{figure*}

\subsection{Experimental Setup}

\noindent \textbf{Datasets.}
We evaluate adaptive elicitation on three opinion datasets: \textsc{CES}~\cite{ces2024}, a nationally representative U.S.\ election survey; \textsc{OpinionQA}~\cite{santurkar2023whose}, a large-scale collection of public opinion questions across social and political topics; and \textsc{Twin-2k}~\cite{toubia2025twink}, which measures economic preferences, cognitive biases, and personality traits for 2{,}000 individuals. All datasets include multiple-choice questions and respondent demographic attributes, enabling controlled evaluation of population-level inference under constrained observation budgets. Additional dataset details are provided in Appendix~\ref{appendix:dataset}.

For all three datasets, we exploit geographic variation to construct a realistic generalization setting. Respondents from the \emph{South} region are used for LLM pretraining and meta-learning, including learning priors over response distributions and query utilities, while respondents from the \emph{West} region are used for adaptive elicitation inference, where the model actively selects queries and infers unobserved responses. Additional cross-region results are reported in Appendix~\ref{appendix:regions} to assess robustness and geographic transferability. This regional split mirrors practical deployment scenarios in which elicitation policies learned from one population must generalize to related but distributionally distinct groups.

\noindent \textbf{Evaluation protocol and metrics.}
Methods iteratively select both \emph{questions} and \emph{respondents} over multiple rounds under a fixed observation budget. Performance is evaluated on a disjoint set of \emph{target questions} never queried during elicitation, measuring the ability to infer unobserved responses. We report average predictive accuracy on target questions as the primary metric, with Perplexity~\cite{jelinek1977perplexity} and Brier Score~\cite{glenn1950verification} used to assess calibration (Appendix~\ref{appendix:metrics}). Unless stated otherwise, results are averaged over 10 independent trials per dataset, each with 20 candidate questions and 5 target questions.


\noindent \textbf{Models and training.}
We meta-train our query-selection policy by fine-tuning a pretrained Llama-3.1-8B model with LoRA and respondent-level data splits, using 80/20 train/validation in the in-distribution region (South) and holding out all respondents from the out-of-distribution region (West) for testing. We select the final checkpoint by validation loss, and additionally report results with a smaller Llama-3.2-1B model trained under the same protocol (full fine-tuning) in Appendix~\ref{appendix:1B-results}. For population modeling and imputation, we train a heterogeneous R-GCN on a graph with respondent, demographic-feature, and query--choice nodes, with all splits performed at the user level to prevent leakage. The GNN is trained under simulated partial observability (including partial-edge masking and cold-start users), and evaluation is conducted in a strict cold-start setting where all user--query edges are removed from the message-passing graph.  See Appendix \ref{appendix:implement} for more details.

\noindent \textbf{Baseline methods.}

We compare our method against three baselines that vary in their query selection strategy and imputation mechanism, including variants that incorporate ideas from prior work~\cite{wang2025adaptive}:

\begin{itemize}[leftmargin=2em, itemsep=0em, topsep=0em, parsep=0pt]
\item \emph{Meta-Random}. This baseline selects questions uniformly at random at each interaction round. A pretrained LLM is used to predict responses to held-out queries from the interaction history. No imputation is performed; responses to unqueried questions are neither imputed nor included in the interaction history.

\item \emph{Meta-Greedy}. Following the meta-policy framework of~\cite{wang2025adaptive}, this baseline uses a pretrained LLM to greedily select queries that maximize expected information gain given the current interaction history. At each round, the model scores all candidate questions based on their predicted informativeness and selects the highest-scoring one. No imputation is performed; responses to unqueried questions are neither imputed nor included in the interaction history.

\item \emph{Meta-Greedy-Imp}. This baseline extends \emph{Meta-Greedy} by incorporating LLM-based imputation of missing responses. After each interaction round, a pretrained LLM predicts responses for respondents that have not been queried, conditioned on the observed interaction history. These predicted responses are treated as observed when selecting subsequent queries; however, imputation is performed independently for each respondent and does not exploit population-level structure.

\end{itemize}

Here, Meta denotes methods that adapt a pretrained LLM as a conditional query policy following the formulation in~\cite{wang2025adaptive}. Implementation  details are deferred to Appendix~\ref{appendix:implement}.

\subsection{Overall Gains from Adaptive Elicitation\label{sec:exp-overall}}

Figure~\ref{fig:main_results} shows accuracy on target questions over interaction rounds (one question per round), where columns correspond to different respondent budgets per round (10\%, 30\%, and 50\%) across three datasets. Across all datasets and budget levels, \emph{Ours} consistently outperforms existing baselines. For example, at a 10\% respondent budget on \textsc{CES}, our method achieves consistent gains over the strongest baseline, with relative improvements ranging from 17.1\% at round~1 to 12.6\% by round~4. These results indicate that effective adaptive elicitation requires jointly reasoning about \emph{who to ask} and \emph{what to ask}, rather than optimizing query selection alone. By explicitly modeling group structure and propagating partial observations, our framework allows information collected from a small subset of respondents to inform predictions for many others, leading to substantially more efficient use of limited budgets. Similar trends are observed for Perplexity and Brier Score (Appendix~\ref{appendix:calibration-results}). In addition, \emph{Meta-Greedy-Imp} does not consistently outperform \emph{Meta-Greedy}, suggesting that LLM-based imputation alone is unreliable without explicit population-level structure, and highlighting the importance of graph-based propagation.

\begin{figure}[!t]
\centering
\begin{subfigure}[b]{0.23\linewidth}
    \centering
    \includegraphics[width=\linewidth]{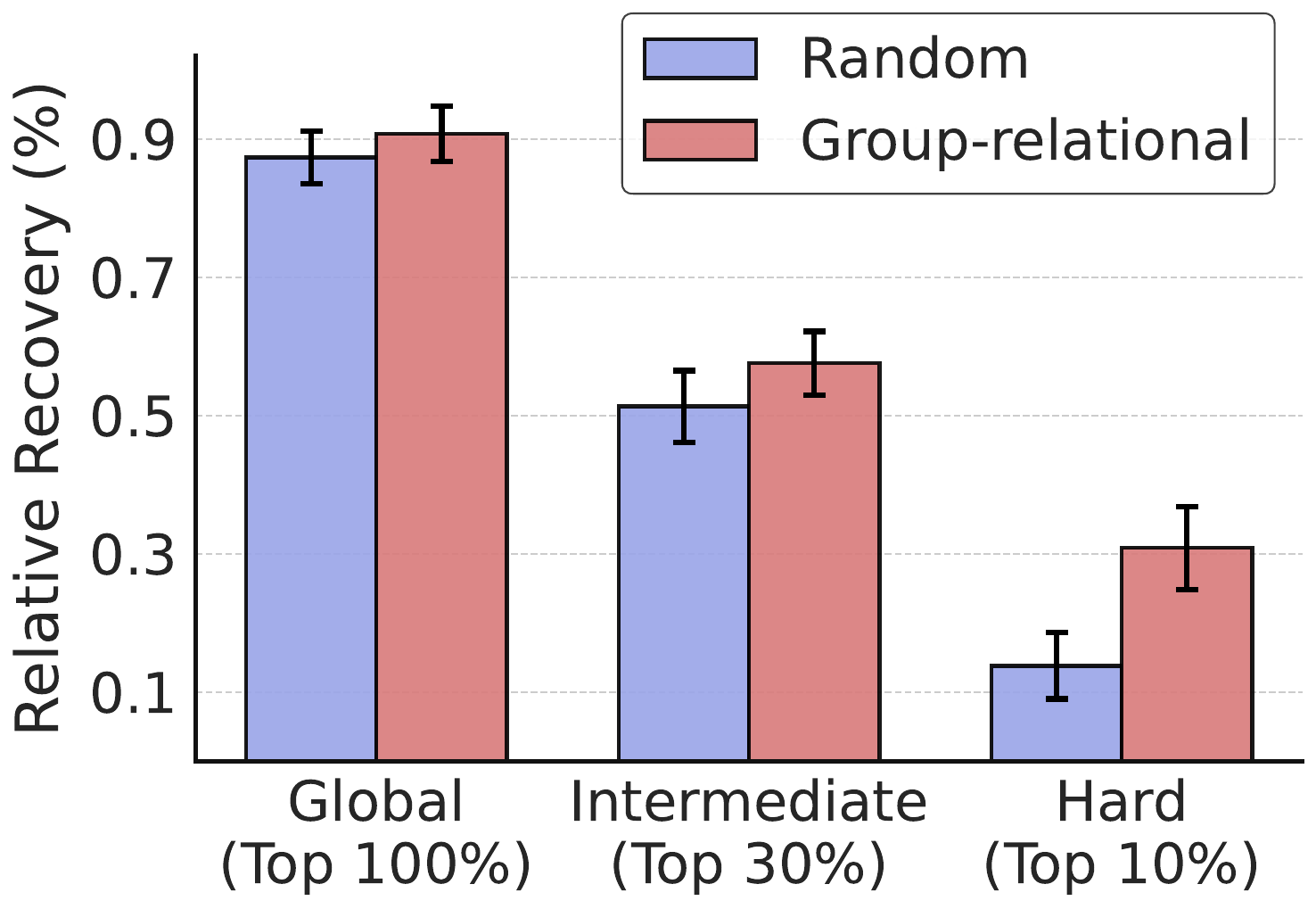}
    \caption{\scriptsize{CES, Budget $=30\%$}}
\end{subfigure}
\hfill
\begin{subfigure}[b]{0.23\linewidth}
    \centering
    \includegraphics[width=\linewidth]{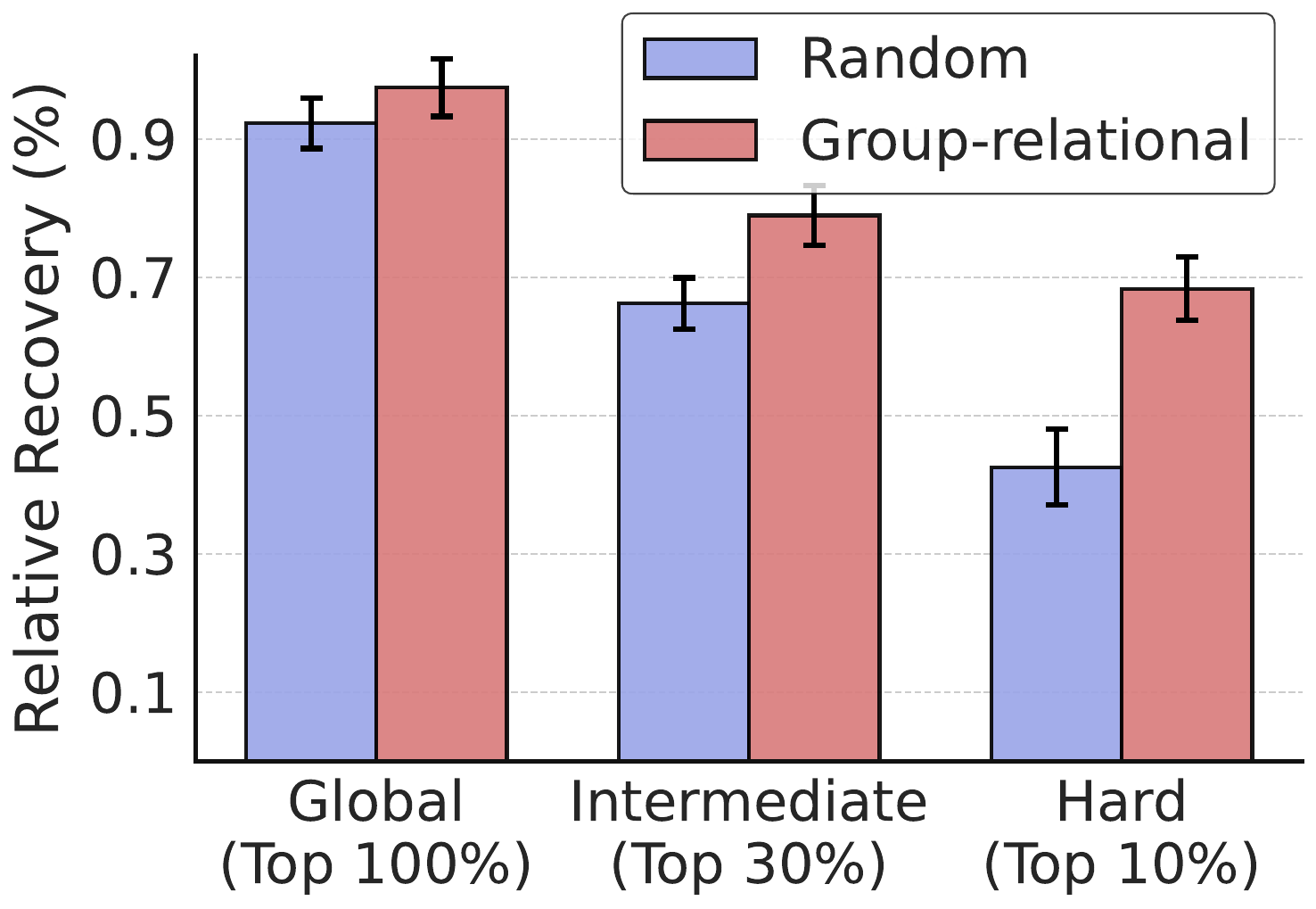}
    \caption{\scriptsize{CES, Budget $=50\%$}}
\end{subfigure}
\hfill
\begin{subfigure}[b]{0.23\linewidth}
    \centering
    \includegraphics[width=\linewidth]{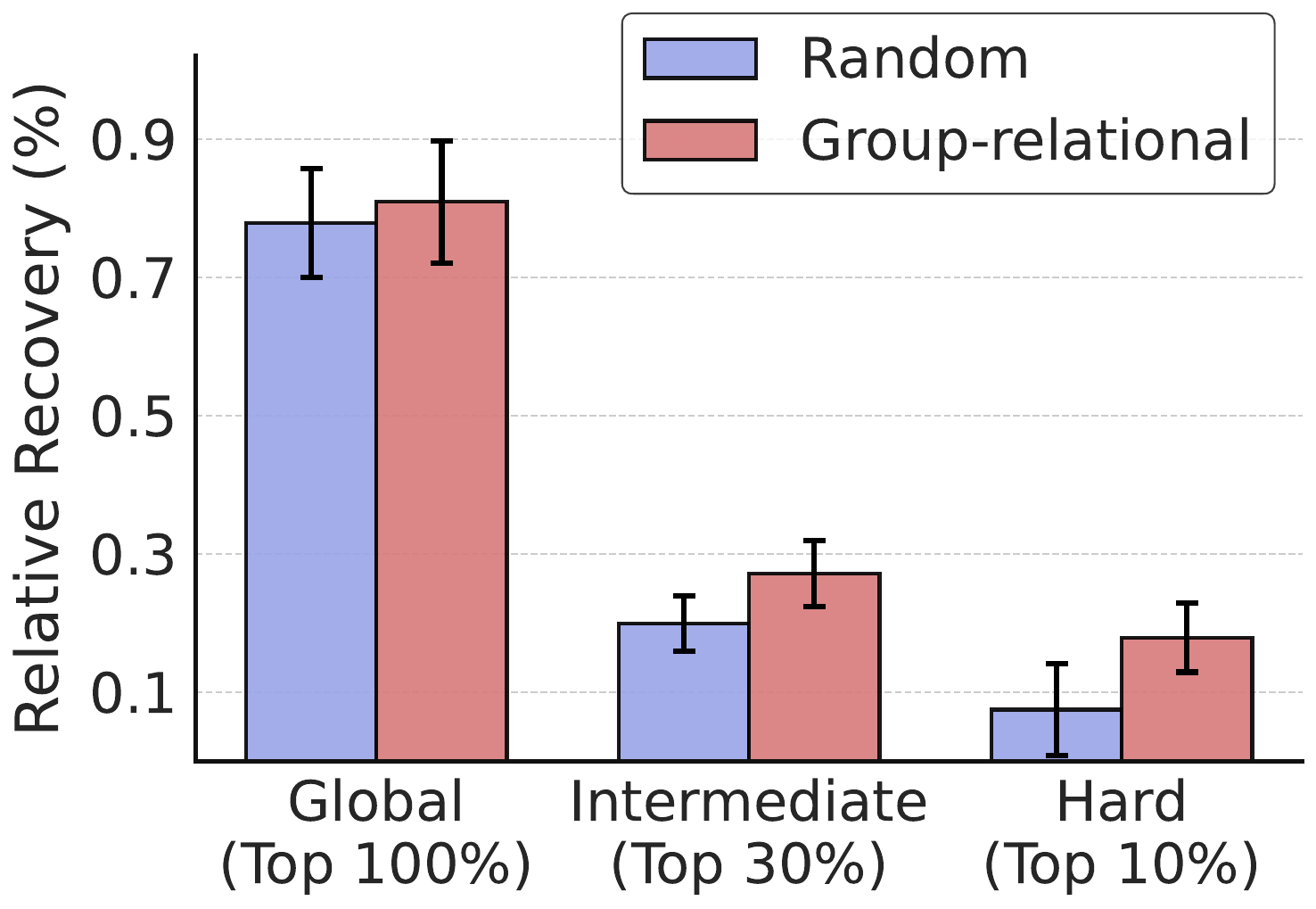}
    \caption{\scriptsize{OpinionQA, Budget $=30\%$}}
\end{subfigure}
\hfill
\begin{subfigure}[b]{0.23\linewidth}
    \centering
    \includegraphics[width=\linewidth]{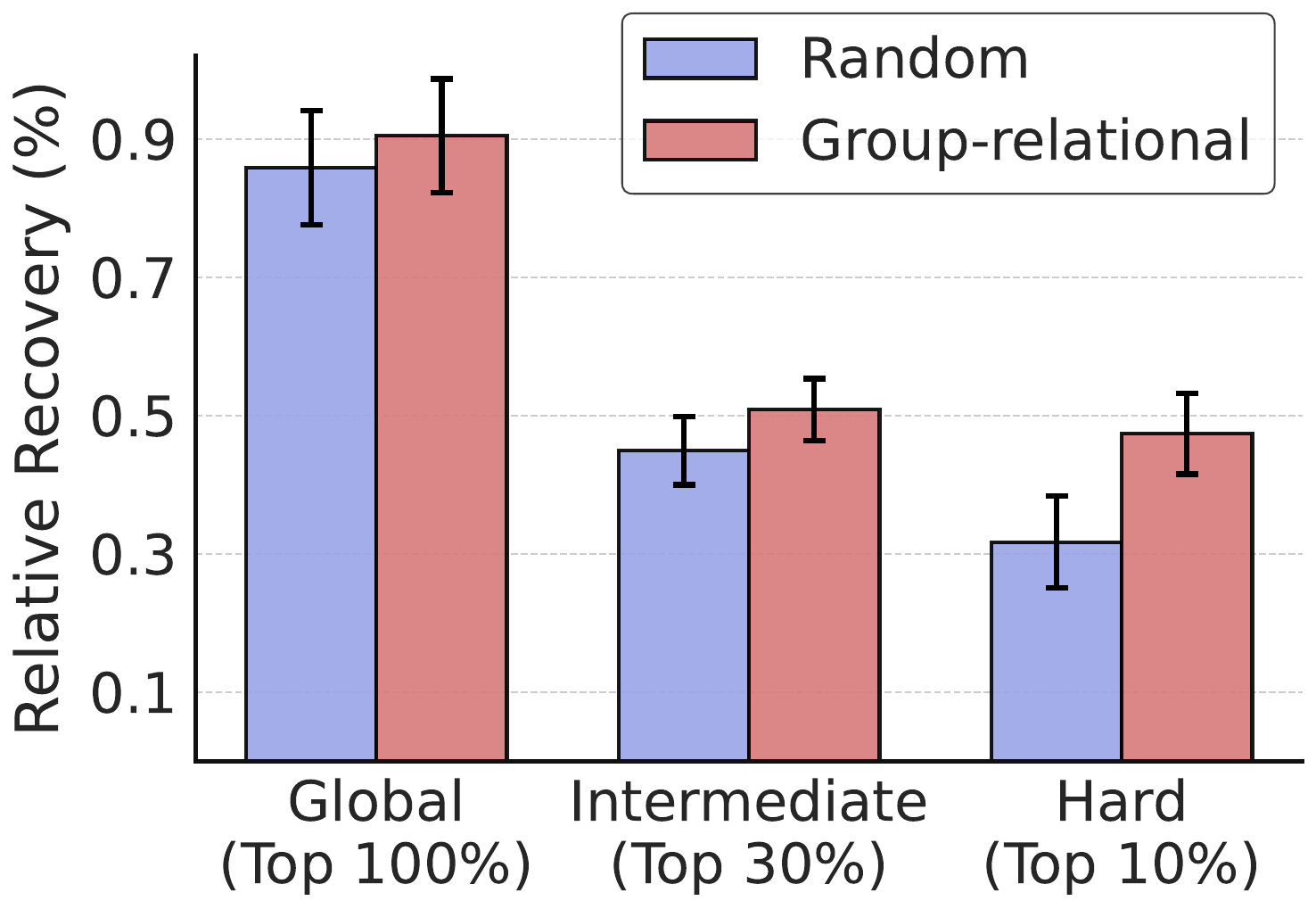}
    \caption{\scriptsize{OpinionQA, Budget$=50\%$}}
    
\end{subfigure}
\caption{Relative recovery. Top $N\%$ denotes respondents in the highest $N\%$ by sensitivity.}
\label{fig:ablation_node}
\end{figure}

\subsection{When Does Respondent Selection Help Most?\label{sec:exp-node}}
We analyze our group-relational respondent selection to understand where adaptive elicitation gains arise and which respondents benefit most. While population-level propagation can amplify observed information, its effectiveness depends on which respondents are observed. 
We stratify respondents by \emph{sensitivity}, where high sensitivity users are those for whom observing ground-truth responses yields the largest performance improvements (see Appendix~\ref{appendix:sensitivity} for details on how this is measured).
We report gains using \emph{Relative Recovery}, $\mathrm{RelRec}(t)=\frac{\mathrm{Acc}(t)-\mathrm{Acc}(0)}{\mathrm{Acc}_{\mathrm{Full}}-\mathrm{Acc}(0)}$, which quantifies the fraction of the round-0 to full-observation gap recovered by round~$t$. 

As shown in Figure~\ref{fig:ablation_node}, gains are highly non-uniform and concentrate on highly sensitive (hard) respondents, reaching up to 20\% relative recovery under a 50\% budget across both datasets. Absolute accuracy exhibits consistent trends across sensitivity tiers and budgets (see Table~\ref{tab:node_selection}).
This pattern suggests that respondent selection focuses limited budget on the hardest-to-impute individuals, which in turn yields more accurate inference of latent group properties from sparse observations.

\subsection{Ablations on Query Selection and Imputation\label{sec:exp-ablation}}

Figure~\ref{fig:ablation_results} compares three variants that fix respondent selection while varying query selection (random vs.\ greedy) and imputation (with vs.\ without GNN-based propagation), isolating the contribution of each component across budgets and interaction rounds.

\noindent \textbf{Effect of Query Selection.}
To isolate the effect of query selection, we compare \textit{RandomQ-Imp} and \textit{GreedyQ-Imp} while holding imputation fixed. Greedy query selection consistently achieves higher accuracy than random querying, with performance gaps widening over interaction rounds. These gains are stable across respondent budgets, indicating that selecting more informative queries yields reliable benefits once imputation is enabled.

\noindent \textbf{Effect of Imputation.}
We compare \textit{GreedyQ-NonImp} and \textit{GreedyQ-Imp}, holding the query selection strategy fixed. \textit{GreedyQ} selects questions greedily based on expected information gain, while \textit{NonImp} and \textit{Imp} respectively denote disabling or enabling GNN-based imputation over the respondent graph. Enabling imputation yields substantially larger gains across all respondent budgets and interaction rounds. Without imputation, accuracy improves only modestly as more interactions are observed; in contrast, GNN-based imputation propagates partial observations across the group, enabling faster improvement and more sustained performance gains.

\begin{figure*}[!t]
\centering
\begin{subfigure}[t]{0.48\textwidth}
  \centering
  \includegraphics[width=\linewidth]{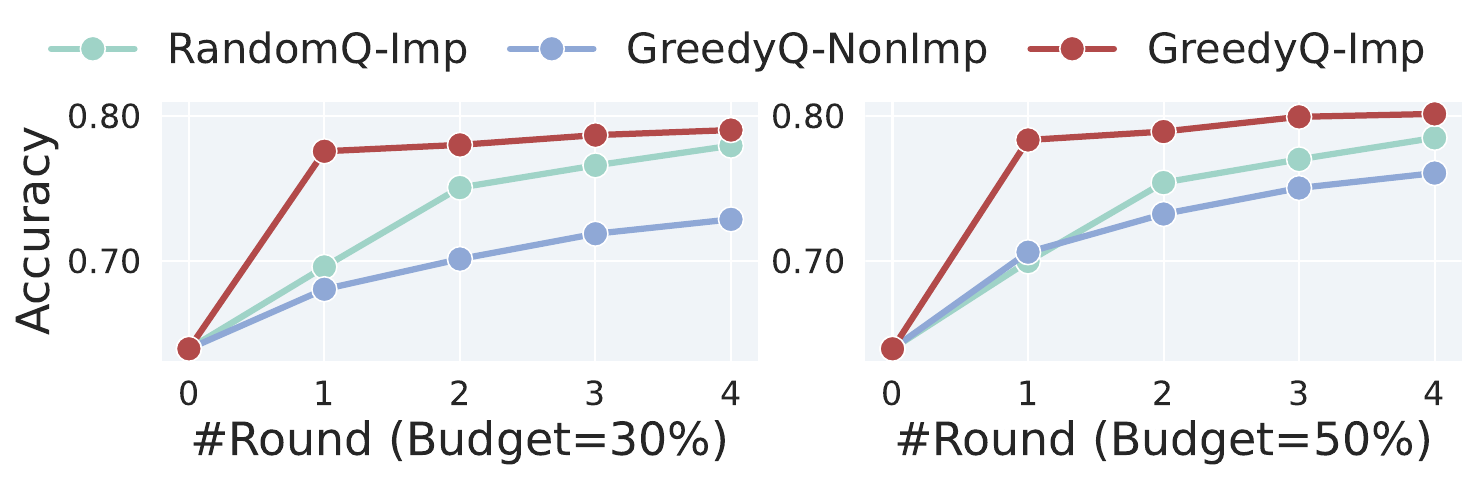}\\[-0.4em]
  \includegraphics[width=\linewidth]{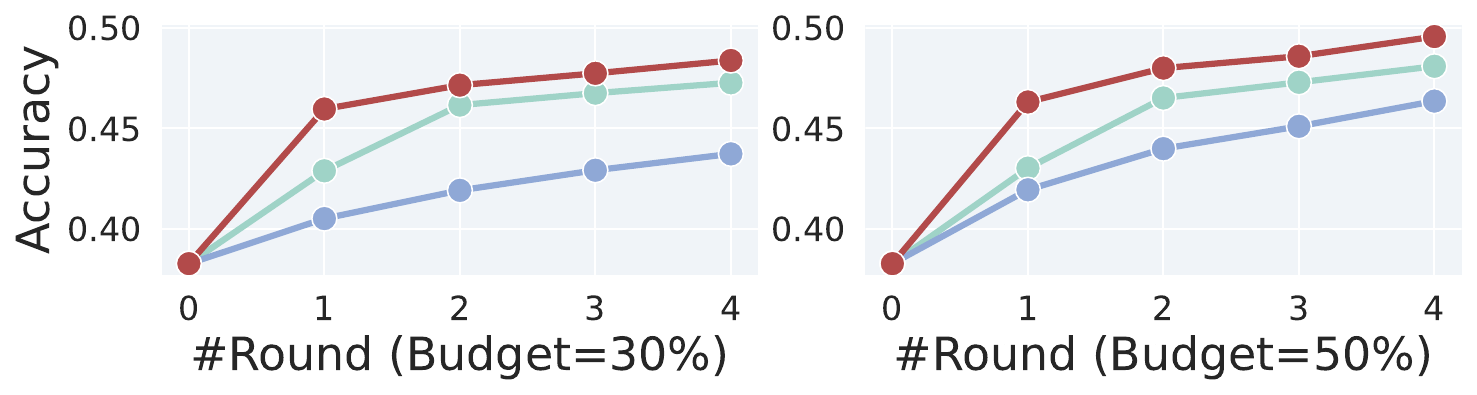}
  \caption{Ablation of query selection and imputation.
  \textbf{Upper}: \textsc{CES}. \textbf{Lower}: \textsc{OpinionQA}.}
  \label{fig:ablation_results}
\end{subfigure}
\hfill
\begin{subfigure}[t]{0.48\textwidth}
  \centering
  \includegraphics[width=\linewidth]{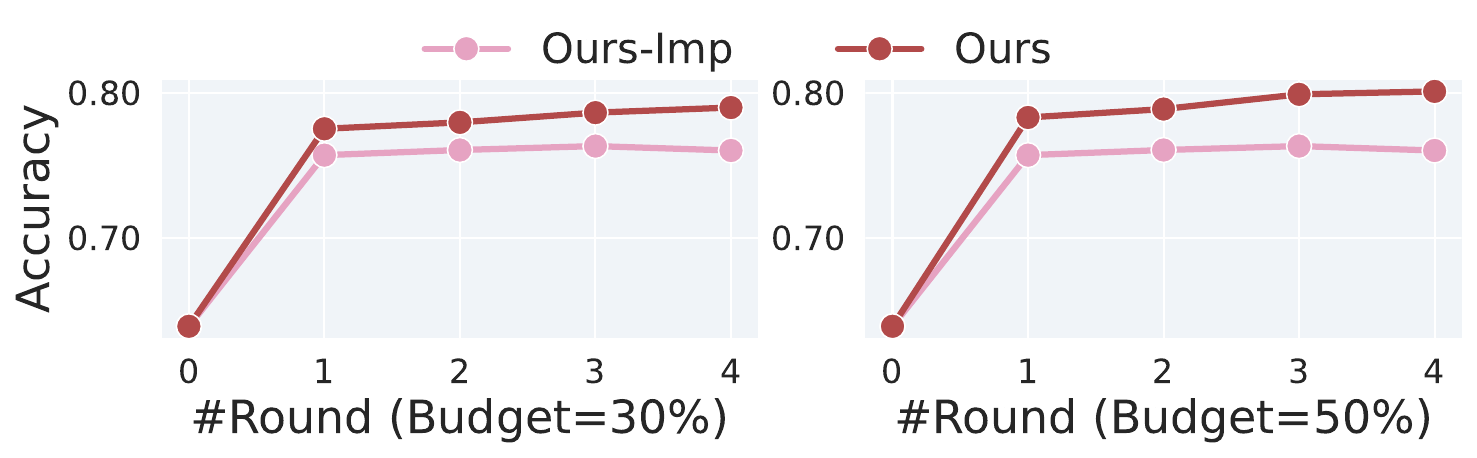}\\[-0.4em]
  \includegraphics[width=\linewidth]{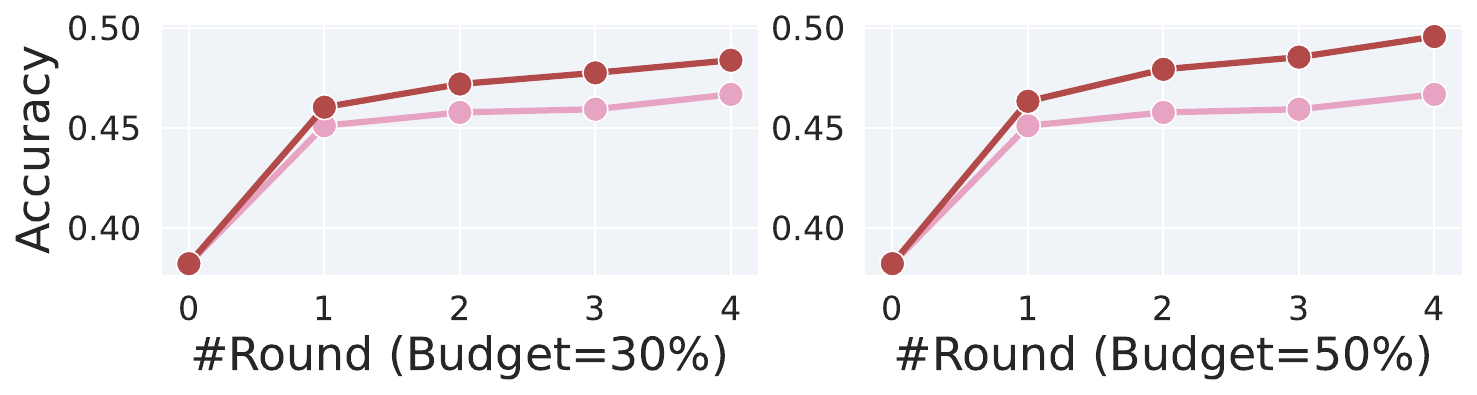}
  \caption{Imputation-only (\emph{Ours-Imp}) vs.\ active observation (\emph{Ours}).
  \textbf{Upper}: \textsc{CES}. \textbf{Lower}: \textsc{OpinionQA}.}
  \label{fig:ablation_imputation}
\end{subfigure}
\caption{Ablations on query selection and imputation across datasets.}
\end{figure*}

\noindent \textbf{Benefits of Actively Selecting Observations.}
Figure~\ref{fig:ablation_imputation} illustrates the effect of actively selecting which respondents to observe. \emph{Ours-Imp} performs group-level imputation at each round without directly observing any respondent answers, instead leveraging population-level structure and group relations to infer responses. Comparing it with \emph{Ours} shows that observing even a small number of strategically selected respondents per round yields consistent gains beyond imputation alone, highlighting the complementary roles of observation and graph-based propagation.

\begin{table}[!t]
\centering
\caption{Round~4 accuracy (averaged over 5 trials) for greedy vs.\ multi-step planning under different budgets.\label{tab:multi_step}}
\small
\resizebox{0.65\linewidth}{!}{
\begin{tabular}{llccccc}
\toprule
\textbf{Budget} & \textbf{Method} & \textbf{Global} & \textbf{Broad} & \textbf{Inter.} & \textbf{Hard} & \textbf{Extreme} \\
& (Top \%) & (100\%) & (50\%) & (30\%) & (10\%) & (5\%) \\
\midrule
$10\%$ & Greedy     
& \textbf{0.488} & 0.426 & 0.384 & 0.356 & 0.322 \\
       & Multi-step 
& 0.485 & \textbf{0.439} & \textbf{0.398} & \textbf{0.372} & \textbf{0.336} \\
\midrule
$30\%$ & Greedy     
& \textbf{0.500} & 0.464 & 0.438 & 0.441 & 0.432 \\
       & Multi-step 
& 0.499 & \textbf{0.478} & \textbf{0.456} & \textbf{0.464} & \textbf{0.443} \\
\midrule
$50\%$ & Greedy     
& \textbf{0.507} & \textbf{0.510} & \textbf{0.507} & 0.545 & \textbf{0.561} \\
       & Multi-step 
& 0.507 & 0.504 & 0.500 & \textbf{0.547} & 0.542 \\
\bottomrule
\end{tabular}}
\end{table}

\subsection{Multi-Step Planning: Are the Gains Worth It?\label{sec:exp-multistep}}

Table~\ref{tab:multi_step} compares greedy selection with multi-step planning across budgets and sensitivity tiers. Under multi-step planning, we first identifies the top-$k{=}10$ candidate queries according to the greedy criterion, then simulates their future impact by performing $N{=}3$ simulated rollouts per candidate, with each rollout unrolling up to the remaining interaction steps. Rollouts approximate the downstream effect of a query by alternately selecting subsequent queries and imputing responses under the current belief state, and the candidate with the highest average rollout utility is selected. Consistent with Remark~\ref{remark:multi-step}, this procedure yields at most marginal gains over greedy selection: performance differences on the global population are negligible, and improvements on highly sensitive subsets are small, inconsistent, and largely disappear or reverse at higher budgets (50\%). Overall, these limited and unstable gains do not justify the substantially higher computational cost of multi-step planning.

\section{Related Work\label{appendix:related_work}}

\textbf{Inference for graphical models}
A number of previous works have focused on modeling relational structures within groups and analyzing data with inherent graphical properties, such as spatial outcomes. 
Early efforts, including \citet{gelfand2003proper,Dobra2011Bayesian,de2012bayesian}, employed conditional autoregressive models (CAR) to model graphical signals, where a node’s outcome is predicted using the outcomes of its neighboring nodes. 
While these models are interpretable and effective for spatial economics and biomedical data, they are parametric and struggle to handle complex natural language information which is a key component of tasks like political surveys, where queries and responses are often phrased in natural language. 
More recently, \citep{suh2025rethinking} proposed using heterogeneous GNNs to process group information without explicitly building a relational structure inside group. We adopt this approach and use the heterogeneous GNN as a tool for message propagation.

\textbf{Predictive inference} De Finetti's predictive perspective on inference establishes that uncertainty originates from unobserved data. \citet{berti2004limit,berti2013exchangeable,Fong2024martingale} extend this viewpoint and propose computational schemes for predictive Bayesian inference. Building on the interpretation of In-Context Learning (ICL) as Bayesian inference \citep{xie2021explanation,Ye2024exchangeable}, in \citet{wang2025adaptive}, the authors demonstrate that fine-tuned large language models (LLMs) constitute valid tools for predictive inference. Our work adopts this approach by utilizing the ICL ability of LLMs as an effective predictive inference mechanism.

\textbf{Multi-turn elicitation} Latent entity elicitation involves gathering information about unobservable characteristics, such as student ability, political intentions, or patient health status. Sequential Bayesian experimental design has been a predominant approach for such problems. \citet{rainforth2024bayesian} provides a comprehensive review of modern Bayesian experimental design, ranging from Bayesian adaptive methods in \citet{cheng2005bayesian} to deep learning techniques adopted in \citet{foster2021deep,ivanova2021implicit}. Recently, \citet{wang2025adaptive} proposed using autoregressive models like LLMs without explicit prior specification. However, these methods are confined to individual-level inference. Our principle-based framework extends elicitation to group settings by incorporating the community's relational information, enabling uncertainty quantification for collective latent entities and group-aware selection of queries and respondents, a capability absent in prior work.

\textbf{Group interaction with LLM} Many studies \citep{hong2024metagpt,khan2024debtaing,tang2024medagents,luo2025large} have focused on interaction between LLM agents or multi-agent systems and social groups. However, previous frameworks designed for collaborative decision-making cannot be directly applied to our group evaluation settings, which require decentralized predictions for each agent or group member based on partial observations. We address this by utilizing the heterogeneous GNN approach proposed in \citep{suh2025rethinking} to impute unobserved responses and enhance LLM-based evaluation, making better use of the strong natural language conversation and reasoning ability of LLM. 

\section{Conclusion}
In this paper, we study adaptive group elicitation under limited query and respondent budgets, and show that effective elicitation depends not only on which questions are asked, but also on which respondents are observed and how partial information is propagated across the population. We propose a principled framework that jointly performs adaptive query selection and group-relational respondent selection, together with graph-based imputation of missing responses. Across multiple real-world opinion datasets, our approach consistently improves inference accuracy and calibration under tight budgets, achieving substantial gains in early elicitation rounds. Further analyses show that these gains concentrate on highly sensitive respondents and persist across model scales, training regimes, and geographic regions.

\newpage

\bibliographystyle{plainnat}
\bibliography{refs}

\newpage

\appendix

\section{Proof of Theoretical Results}
In this section, we provide proofs and justification for results in Section~\ref{sec:theory} with details.
\subsection{Near Optimality of Greedy Algorithm}
Recall the settings in Section~\ref{sec:theory}, we let $S_t = \cup_{i=1}^t(R_i, x_i) \subset \mathcal{V} \times \mathcal{X}$ denote the set of user-query pairs selected up to round $t$, where by a slight abuse of notation, we shall always identify a pair $(R,x)$ of respondent subset $R\subset \mathcal{V}$ and query $x\in \mathcal{X}$ with the corresponding subset ${R}\times\{x\}\subset \mathcal{V}\times \mathcal{X}$. Let $f: 2^{\mathcal{V} \times \mathcal{X}} \to \mathbb{R}_{\geq 0}$ be a utility function that measures the information gained about the group's latent properties, such as the expected information gain defined in Section \ref{subsec:testing_inference}. The marginal gain of adding a new set of pairs $A \subset \mathcal{V} \times \mathcal{X}$ to a history $S$ is defined as $\Delta(A \mid S) = f(S \cup A) - f(S)$. For a set $S\subset \mathcal{V}\times\mathcal{X}$, we define the candidate query set $\mathcal{X}_{S}$ to be the set $\mathcal{X}_{S}=\{x\in \mathcal{X}:(\mathcal{V}\times \{x\})\cap S=\emptyset\}$ consisting of queries that has not been selected. For fixed $k$, define $\Phi_{k}(x|S) = \max_{|R|\leq k}\Delta\left((R,x)|S\right)$, $\Psi(x|S)= \Delta\left((V,x)|S\right)$, and set $x_{S}^\Psi=\arg\max_{x\in \mathcal{X}_S}\Psi(x|S)$, the query with maximal utility gain when gather information from all respondents. Throughout this section, we assume $T>2$, and we optimize over feasible $T$-round sequences $\{(R_t,x_t)\}_{t=1}^T$ with satisfying $|R_t|\leq k$ and that $x_{1},\ldots,x_T$ are distinct, equivalently, at each step $t$, $x_{t}\in \mathcal{X}_{t-1}$. 
\begin{assumption}
\label{as:submodular}
The utility function $f$ satisfies the following properties for any subsets $W_{1} \subseteq W_2 \subseteq \mathcal{V} \times \mathcal{X}$ and any $W \subset \mathcal{V} \times \mathcal{X}$:
\begin{enumerate}[label=(\alph*)]
    \item \textit{(Zero-utility for empty set.)} $f(\emptyset)=0$
    \item \textit{(Monotonicity).} $f(W_2) \geq f(W_1)$.
    \item \textit{(Submodularity).} $\Delta(W \mid W_1) \geq \Delta(W \mid W_2)$.
    \item \textit{(Query-budget alignment)}. There exists a constant $\eta\in (0,1)$, such that for each $k$ and any set $S\subset \mathcal{V}\times \mathcal{X}$ with nonempty candidate set and $|\mathcal{X}_S|\geq 1$, we have 
    \begin{equation*}
        \begin{aligned}
            \Phi_{k}(x^{\Psi}_S|S)\geq\eta\max_{x\in \mathcal{X}_{S} }\Phi_k(x|S).
        \end{aligned}
    \end{equation*}
\end{enumerate}
\end{assumption}

Monotonicity means that adding more observations does not decrease utility. Submodularity captures the principle of diminishing returns: the marginal benefit of adding a new observation diminishes as the history grows, these property is commonly assumed in adaptive information acquisition problems \citep{krause08sensor,wang2025adaptive}. 
The last alignment condition rules out candidate queries whose EIG is concentrated on an arbitrarily small niche subset of respondents. This is consistent with our setting, where candidate queries are designed to elicit information about a shared group-level latent entity instead of few specific individuals. Specifically, suppose there are constants $0<\lambda^{-}\leq \lambda^{+}<\infty$ and nontrivial gain $\Psi(x^{\Psi}_{S})>0$, such that for each budget $k\in [1:|\mathcal{V}|]$,
$\lambda^{-}\leq \frac{\Phi_{k}(x|S)}{\Psi(x|S)}\leq \lambda^{+}$ holds for any $x\in \mathcal{X}_{S}$, then by the following inequality, 
\begin{equation*}
    \Phi_k(x_{S}^{\Psi}|S)\geq \lambda^-\Psi(x_{S}^\Psi|S)= \lambda^{-}\max_{x\in \mathcal{X}_S}\Psi(x|S)\geq \frac{\lambda^-}{\lambda^+}\max_{x\in \mathcal{X}_S}\Phi_{k}(x|S),
\end{equation*}
Assumption~\ref{as:submodular}(d) holds with $\eta = \lambda^{-}/\lambda^{+}$ uniformly for any set $S$ and budget $k$.
\begin{theorem}[Restatement of Theorem~\ref{thm:submodular}]
Let $\{(R_t, x_t)\}_{t=1}^T$ be the sequence selected by the following joint greedy algorithm, where at each step $t$:
\[
x_t = \arg\max_{x\in \mathcal{X}_{S_{t-1}}}{\Phi_k}(x|S_{t-1}),\ R_{t} = \arg\max_{|R|\leq k, R\subset \mathcal{V}}\Delta\big((R,x_{t})|S_{t-1}\big).
\]
with $S_0 = \emptyset$ and $S_t = S_{t-1} \cup (R_t, x_t)$. If $\{(R_t^*, x_t^*)\}_{t=1}^T$ is the optimal sequence, then under Assumption~\ref{as:submodular}:
\[
f\big(\cup_{t=1}^T(R_t^*, x_t^*)\big) \leq \frac{2}{1 - e^{-2}} \cdot f\big(\cup_{t=1}^{T}(R_t, x_t)\big).
\]
\end{theorem}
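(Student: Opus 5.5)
The plan is the standard Nemhauser-type recursion, with one extra step for the distinct-query constraint. That constraint is what produces the factor $2$; the exponent $e^{-2}$ comes from the rate $2/T$ in the resulting recursion. Write $O=\cup_{j=1}^T(R_j^*,x_j^*)$ and let $g_i=\Delta\big((R_i,x_i)\mid S_{i-1}\big)=\Phi_k(x_i\mid S_{i-1})$ be the greedy gain at round $i$. Assumption~\ref{as:submodular}(a) gives the telescoping identity $f(S_t)=\sum_{i\le t}g_i$, and monotonicity gives $g_i\ge 0$.

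\textbf{Step 1 (domination).} Fix $t<T$. By monotonicity and then submodularity applied term by term (adding the optimal pairs one at a time),
\[
f(O)\le f(O\cup S_t)\le f(S_t)+\sum_{j=1}^T\Delta\big((R_j^*,x_j^*)\mid S_t\big).
\]
I then split the indices $j$ according to whether $x_j^*$ has already been used by greedy.
\begin{itemize}
\item If $x_j^*\in\mathcal{X}_{S_t}$, then $\Delta((R_j^*,x_j^*)\mid S_t)\le\Phi_k(x_j^*\mid S_t)\le\Phi_k(x_{t+1}\mid S_t)=g_{t+1}$, by the greedy choice of $x_{t+1}$.
\item If $x_j^*=x_i$ for some $i\le t$, then $S_{i-1}\subseteq S_t$, so submodularity gives $\Delta((R_j^*,x_i)\mid S_t)\le\Delta((R_j^*,x_i)\mid S_{i-1})\le\Phi_k(x_i\mid S_{i-1})=g_i$, because $R_i$ is the maximizer over $|R|\le k$.
\end{itemize}
The optimal queries are distinct, so each greedy index $i$ is matched to at most one $j$. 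Hence the second group contributes at most $\sum_{i\le t}g_i=f(S_t)$. The first group has at most $T$ terms and contributes at most $Tg_{t+1}$. Altogether,
\[
f(O)\le 2f(S_t)+T\,g_{t+1}.
\]

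\textbf{Step 2 (recursion).} Set $a_t=\tfrac12 f(O)-f(S_t)$. Step 1 gives $g_{t+1}\ge\tfrac{2}{T}a_t$, so
\[
a_{t+1}=a_t-g_{t+1}\le\big(1-\tfrac{2}{T}\big)a_t.
\]
Since $T>2$, the factor $1-2/T$ lies in $(0,1)$. If some $a_t$ becomes negative the bound only gets easier, because later $a$'s stay below it ($g\ge0$). Iterating from $a_0=\tfrac12 f(O)$ yields $a_T\le(1-2/T)^T\tfrac12 f(O)\le e^{-2}\tfrac12 f(O)$. This rearranges to $f(S_T)\ge\frac{1-e^{-2}}{2}f(O)$, which is the claim.

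The main obstacle is the bookkeeping in Step 1 for optimal pairs whose query greedy has already consumed. These cannot be charged against $g_{t+1}$, which is the usual move. The fix is to charge each of them back to the round $i$ where greedy used that query, using submodularity together with the optimality of $R_i$ for $x_i$. Distinctness of the optimal queries is what keeps that charge within a single copy of $f(S_t)$. Note that Assumption~\ref{as:submodular}(d) is not needed for the joint greedy rule. I expect it to enter only in Theorem~\ref{thm:greedy1}, where $x_t$ is chosen by $\Psi$ rather than $\Phi_k$ and one loses an additional factor $\eta$.
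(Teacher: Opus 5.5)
Your proposal is correct and follows essentially the same argument as the paper. Both proofs bound $f(O)\le f(S_t)+\sum_j\Delta_j$. Both charge each optimal pair whose query greedy has already used back to the round where that query was chosen, via submodularity and the optimality of $R_i$. Both bound the remaining pairs by the next greedy increment to get $f(O)\le 2f(S_t)+T\,g_{t+1}$, and then iterate $f(O)-2f(S_t)$ (your $2a_t$) with contraction factor $1-2/T$. Your version is slightly more careful about the range $t<T$ and the sign of $a_t$, and you are right that Assumption~\ref{as:submodular}(d) is not needed for the joint greedy rule.
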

\begin{proof}[Proof of Theorem~\ref{thm:submodular}]
Assume that $f$ is monotone and submodular on the set of all pairs $(R, x)$ where $V \subset \mathcal{V}$ and $x \in \mathcal{X}_{\text{pool}}$, as per Assumption~\ref{as:submodular}. We aim to show that for the greedy sequence $\{ (R_t, x_t) \}_{t=1}^T$ and the optimal sequence $\{ (R_t^*, x_t^*) \}_{t=1}^T$, the following approximation holds:

\[
f(S_T^*) \leq \frac{2}{1 - e^{-2}} f(S_T)
\]

where $S_i = \cup_{l=1}^i (R_l, x_l)$ for $i \geq 1$ and $S_0 = \emptyset$, and similarly for $S_T^*$. We assume $f(\emptyset) = 0$.

By monotonicity of $f$, for any $i$ with $0 \leq i \leq T$, $f(S_T^*) \leq f(S_T^* \cup S_i)$. Since $S_i \subseteq S_T^* \cup S_i$, we can write:
    \[
    f(S_T^* \cup S_i) = f(S_i) + \sum_{j=1}^T \left[ f(S_j^* \cup S_i) - f(S_{j-1}^* \cup S_i) \right]
    \]
    where $S_j^*$ is the set of the first $j$ pairs in the optimal sequence. Fix $i$, denote $\Delta_j = f(S_j^* \cup S_i) - f(S_{j-1}^* \cup S_i)$, we have
    \[
    f(S_T^*) \leq f(S_i) + \sum_{j=1}^T \Delta_j.
    \]
    Now we bound the terms $\Delta_{j}$ for $1\leq j\leq T$.     Let $X_i = \{ x_1, \ldots, x_i \}$ be the set of queries in $S_i$. For each $j$, define:
    \begin{itemize}
        \item $\Gamma_i = \{ j \in \{1, \ldots, T\} : x_j^* \in X_i \}$, the set of indices where the optimal query $x_j^*$ is in $S_i$
        \item For $j \in \Gamma_i$, let $\sigma_i(j)$ be the unique index in $\{1, \ldots, i\}$ such that $x_j^* = x_{\sigma_i(j)}$ (i.e., the greedy step where $x_j^*$ was selected)
    \end{itemize}

    We bound $\Delta_j$ in the following two cases.
    
    \textbf{Case 1: $j \in \Gamma_i$.} Here, $x_j^*$ is already in $S_i$. By submodularity, since $S_{\sigma_i(j)-1} \subseteq S_i \subseteq S_{j-1}^* \cup S_i$:
    \[
    \Delta_j \leq f(S_{\sigma_i(j)-1} \cup  (R_j^*, x_j^*)) - f(S_{\sigma_i(j)-1})
    \]
    By the greedy choice at step $\sigma_i(j)$, the pair $(R_{\sigma_i(j)}, x_j^*)$ was chosen to maximize gain from $S_{\sigma_i(j)-1}$, thus
    \[
    \Delta_j \leq f(S_{\sigma_i(j)}) - f(S_{\sigma_i(j)-1}).
    \]

    \textbf{Case 2: $j \notin \Gamma_i$} 
    Here, $x_j^*$ is not in $S_i$. By submodularity, since $S_i \subseteq S_{j-1}^* \cup S_i$:
    \[
    \Delta_j \leq f(S_i \cup  (R_j^*, x_j^*)) - f(S_i)
    \]
    At step $i+1$, the greedy algorithm selects a pair $(R_{i+1}, x_{i+1})$ that maximizes the gain from $S_i$. Since $x_j^* \notin X_i$, the pair $(R_j^*, x_j^*)$ is considered at step $i+1$, so:
    \[
    f(S_i \cup  (R_j^*, x_j^*) ) - f(S_i) \leq f(S_{i+1}) - f(S_i)
    \]
    Thus we have
    $\Delta_j \leq f(S_{i+1}) - f(S_i)$ in this case.
    
    Splitting the sum over $j$ by the two different cases
    \[
    \sum_{j=1}^T \Delta_j = \sum_{j \in \Gamma_i} \Delta_j + \sum_{j \notin \Gamma_i} \Delta_j,
    \]
    from the above discussion, we have
    \begin{equation*}
        \begin{aligned}
            \sum_{j \in \Gamma_i} \Delta_j &\leq \sum_{j \in \Gamma_i} \left[ f(S_{\sigma_i(j)}) - f(S_{\sigma_i(j)-1}) \right],\quad
    \sum_{j \notin \Gamma_i} \Delta_j \leq \sum_{j \notin \Gamma_i} \left[ f(S_{i+1}) - f(S_i) \right]   
        \end{aligned}
    \end{equation*}
    Since $\sigma_i$ is an injection from $\Gamma_i$ to $[i]$, the sum over $j \in \Gamma_i$ covers distinct increments, by monotonicity, it holds that
    \begin{equation*}
        \begin{aligned}
            \sum_{j \in \Gamma_i} \left[ f(S_{\sigma_i(j)}) - f(S_{\sigma_i(j)-1}) \right] &\leq \sum_{t=1}^i [f(S_t) - f(S_{t-1})] = f(S_i),
        \end{aligned}
    \end{equation*}
    Note the number of $j \notin \Gamma_i$ is $T - |\Gamma_{i}| \leq T$, by monotonicity,
    \begin{equation*}
        \begin{aligned}
            \sum_{j \notin \Gamma_i} \left[ f(S_{i+1}) - f(S_i) \right] &\leq (T - |\Gamma_{i}|) [f(S_{i+1}) - f(S_i)] \leq T [f(S_{i+1}) - f(S_i)].
        \end{aligned}
    \end{equation*}
    Hence \(\sum_{j=1}^T \Delta_j \leq f(S_i) + T [f(S_{i+1}) - f(S_i)]\).
    Substituting into the inequality for $f(S_T^*)$, we have 
    \begin{equation*}
        \begin{aligned}
            f(S_T^*) \leq 2 f(S_i) + T [f(S_{i+1}) - f(S_i)].
        \end{aligned}
    \end{equation*}

    Define $\delta_i = f(S_T^*) - 2 f(S_i)$. Then by the above inequality,
    \[
    \delta_{i+1} \leq \delta_i - 2 \cdot \frac{\delta_i}{T} = \delta_i \left(1 - \frac{2}{T}\right)
    \]
    Note that $\delta_0 = f(S_T^*)$ since $f(S_0) = 0$, we have:
    \[
    \delta_T \leq \delta_0 \left(1 - \frac{2}{T}\right)^T \leq f(S_T^*) e^{-2}
    \]
    Rearranging gives 
    \(
    f(S_T^*) \leq \frac{2}{1 - e^{-2}} f(S_T).
    \)
\end{proof}


\begin{theorem}[Near optimality of two-stage greedy algorithm]\label{thm:greedy2}
Let $\{(R_t, x_t)\}_{t=1}^T$ be the sequence selected by the two-stage greedy algorithm, where at each step:
\[
x_{t+1} = \arg\max_{x \in \mathcal{X}_{S_t}} \Delta((\mathcal{V}, x) \mid S_t), \quad
R_{t+1} = \arg\max_{\substack{R \subset \mathcal{V},\ |R| \leq k}} \Delta((R, x_{t+1}) \mid S_t),
\]
with $S_0 = \emptyset$ and $S_t = S_{t-1} \cup (R_t, x_t)$. If $\{(R_t^*, x_t^*)\}_{t=1}^T$ is the optimal sequence, then under Assumptions~\ref{as:submodular}:
\[
f\big(\cup_{t=1}^{T}(R_t^*, x_t^*)\big) \leq \frac{2}{1 - e^{-2 \eta}} \cdot f\big(\cup_{t=1}^T(R_t, x_t)\big).
\]
\end{theorem}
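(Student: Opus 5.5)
We reuse the telescoping argument from the proof of Theorem~\ref{thm:submodular}. The only change is that the per-step comparison with the greedy increment now costs a factor $\eta$, supplied by Assumption~\ref{as:submodular}(d).

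Fix $i\in\{0,\dots,T-1\}$. By monotonicity, $f(S_T^*)\le f(S_T^*\cup S_i)=f(S_i)+\sum_{j=1}^T\Delta_j$, where $\Delta_j=f(S_j^*\cup S_i)-f(S_{j-1}^*\cup S_i)$. We split the indices $j$ according to whether $x_j^*\in X_i$, defining $\Gamma_i$ and $\sigma_i$ exactly as before.

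\textbf{Case 1: $j\in\Gamma_i$.} Submodularity with $S_{\sigma_i(j)-1}\subseteq S_{j-1}^*\cup S_i$ gives
\[
\Delta_j\le\Delta\bigl((R_j^*,x_{\sigma_i(j)})\mid S_{\sigma_i(j)-1}\bigr).
\]
Since $|R_j^*|\le k$ and $R_{\sigma_i(j)}$ is chosen to maximize $\Delta((R,x_{\sigma_i(j)})\mid S_{\sigma_i(j)-1})$ over $|R|\le k$, this is at most $f(S_{\sigma_i(j)})-f(S_{\sigma_i(j)-1})$. No $\eta$ is lost here, because the second stage is an exact maximization for the fixed query. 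Summing over $j\in\Gamma_i$ and using injectivity of $\sigma_i$ together with monotonicity, these terms total at most $f(S_i)$.

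\textbf{Case 2: $j\notin\Gamma_i$.} Submodularity gives $\Delta_j\le\Delta((R_j^*,x_j^*)\mid S_i)\le\Phi_k(x_j^*\mid S_i)\le\max_{x\in\mathcal{X}_{S_i}}\Phi_k(x\mid S_i)$, since $x_j^*\in\mathcal{X}_{S_i}$. The two-stage rule picks $x_{i+1}=x^\Psi_{S_i}$. The second stage then gives $f(S_{i+1})-f(S_i)=\Phi_k(x^\Psi_{S_i}\mid S_i)$, and Assumption~\ref{as:submodular}(d) bounds this below by $\eta\max_x\Phi_k(x\mid S_i)$. Hence $\Delta_j\le\eta^{-1}[f(S_{i+1})-f(S_i)]$.

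This is the crux of the proof: the alignment assumption is used precisely to convert the "all respondents" query criterion $\Psi$ into the budgeted criterion $\Phi_k$. We also check that $\mathcal{X}_{S_i}$ is nonempty, which holds because $T$ distinct queries are feasible.

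Combining the two cases yields
\[
f(S_T^*)\le 2f(S_i)+\frac{T}{\eta}\,[f(S_{i+1})-f(S_i)].
\]
Set $\delta_i=f(S_T^*)-2f(S_i)$. Then $f(S_{i+1})-f(S_i)\ge\frac{\eta}{T}\delta_i$, so $\delta_{i+1}=\delta_i-2[f(S_{i+1})-f(S_i)]\le(1-\frac{2\eta}{T})\delta_i$.

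If some $\delta_i$ becomes negative, the bound holds trivially from that point on by monotonicity. Note that $1-2\eta/T\ge0$ because $\eta<1$ and $T>2$. Iterating from $\delta_0=f(S_T^*)$ gives $\delta_T\le(1-\frac{2\eta}{T})^T f(S_T^*)\le e^{-2\eta}f(S_T^*)$. Rearranging, $f(S_T^*)(1-e^{-2\eta})\le 2f(S_T)$, which is the claim with $C_2=2/(1-e^{-2\eta})$, independent of $k$ and $T$.
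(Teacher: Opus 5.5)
Your proposal is correct and follows essentially the same route as the paper's proof: the same telescoping decomposition with the $\Gamma_i$/$\sigma_i$ case split, the exact second-stage comparison in Case 1, Assumption~\ref{as:submodular}(d) costing the factor $1/\eta$ in Case 2, and the recursion $\delta_{i+1}\le(1-2\eta/T)\delta_i$. Your write-up is slightly more careful than the paper's: you state the Case 1 comparison against $R_{\sigma_i(j)}$ explicitly, and you check that $\mathcal{X}_{S_i}\neq\emptyset$ and $1-2\eta/T\ge 0$ before iterating.
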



\begin{proof}[Proof of Theorem~\ref{thm:greedy2}]
Assume that $f$ is monotone and submodular on the set of all pairs $(R, x)$ where $R \subset \mathcal{V}$ and $x \in \mathcal{X}_{\text{pool}}$.  We aim to show that for the two-stage greedy sequence $\{ (R_t, x_t) \}_{t=1}^T$ and the optimal sequence $\{ (R_t^*, x_t^*) \}_{t=1}^T$, the following approximation holds:

\[
f(S_T^*) \leq \frac{2}{1 - e^{-2\eta}} f(S_T)
\]

where $S_i = \cup_{l=1}^i (R_l, x_l)$ for $i \geq 1$ and $S_0 = \emptyset$, and similarly for $S_T^*$. By Assumption~\ref{as:submodular}, we have $f(S_{0}) = f(\emptyset) = 0$. By monotonicity of $f$, for any $i$ with $0 \leq i \leq T$, $f(S_T^*) \leq f(S_T^* \cup S_i)$. Since $S_i \subseteq S_T^* \cup S_i$, we can write:
\[
f(S_T^* \cup S_i) = f(S_i) + \sum_{j=1}^T \left[ f(S_j^* \cup S_i) - f(S_{j-1}^* \cup S_i) \right]
\]
where $S_j^*$ is the set of the first $j$ pairs in the optimal sequence. Fix $i$, denote $\Delta_j = f(S_j^* \cup S_i) - f(S_{j-1}^* \cup S_i)$, we have
\[
f(S_T^*) \leq f(S_i) + \sum_{j=1}^T \Delta_j.
\]

Now we bound the terms $\Delta_{j}$ for $1\leq j\leq T$. Let $X_i = \{ x_1, \ldots, x_i \}$ be the set of queries in $S_i$. For each $j$, define:
\begin{itemize}
    \item $\Gamma_i = \{ j \in \{1, \ldots, T\} : x_j^* \in X_i \}$, the set of indices where the optimal query $x_j^*$ is in $S_i$
    \item For $j \in \Gamma_i$, let $\sigma_i(j)$ be the unique index in $\{1, \ldots, i\}$ such that $x_j^* = x_{\sigma_i(j)}$ (i.e., the greedy step where $x_j^*$ was selected)
\end{itemize}

We bound $\Delta_j$ in the following two cases.

\textbf{Case 1: $j \in \Gamma_i$.} Here, $x_j^*$ is already in $S_i$. By submodularity, since $S_{\sigma_i(j)-1} \subseteq S_i \subseteq S_{j-1}^* \cup S_i$:
\[
\Delta_j = \Delta\big((R_j^*,x_j^*)|S_{\sigma_{j-1}^*}\cup S_i\big) \leq \Delta\big((R_j^*, x_j^*)|S_{\sigma_i(j)-1}\big).
\]
By the greedy choice at step $\sigma_i(j)$, 
we have 
\begin{equation*}
\Delta\big((R_{j}^*,x_j^*)|S_{\sigma_{i}(j)-1}\big) \leq \Delta\big((R_{j}^*,x_j^*)|S_{\sigma_i(j)-1}\big)
\end{equation*}
thus
\[
\Delta_j \leq \Delta\big((R_{j}^*,x_j^*)|S_{\sigma_{i}(j)-1}\big) \leq \Delta\big((R_{j}^*,x_j^*)|S_{\sigma_i(j)-1}\big) = f(S_{\sigma_i(j)})-f(S_{\sigma_i(j)-1}).
\]

\textbf{Case 2: $j \notin \Gamma_i$} 
Here, $x_j^*$ is not in $S_i$. By submodularity, since $S_i \subseteq S_{j-1}^* \cup S_i$:
\[
\Delta_j \leq f(S_i \cup  (R_j^*, x_j^*)) - f(S_i)
\]
By Assumption~\ref{as:submodular}(d) and the two-stage greedy selection: we have $x_{S_{i}}^\Psi=x_{i+1}$, and $\Phi_k(x_{i+1}|S_i)=f(S_{i+1})-f(S_{i})$, and 
$$\Delta\big((R_{j}^*,x_j^*)|S_i\big)\leq \max_{x\in \mathcal{X}_{S_{i}}}\Phi_{k}(x|S_i)\leq \frac{1}{\eta} \Phi_{k}(x_{S}^\Psi|S)=\frac{1}{\eta}\big[f(S_{i+1})-f(S_{i})\big],$$ 
thus
\[
\Delta_j \leq \frac{1}{\eta} \left[ f(S_{i+1}) - f(S_i) \right]
\]

Splitting the sum over $j$ by the two different cases
\[
\sum_{j=1}^T \Delta_j = \sum_{j \in \Gamma_i} \Delta_j + \sum_{j \notin \Gamma_i} \Delta_j,
\]
from the above discussion, we have
\begin{align*}
    \sum_{j \in \Gamma_i} \Delta_j &\leq \sum_{j \in \Gamma_i} \left[ f(S_{\sigma_i(j)}) - f(S_{\sigma_i(j)-1}) \right] \\
    \sum_{j \notin \Gamma_i} \Delta_j &\leq \frac{T - |\Gamma_i|}{\eta} \left[ f(S_{i+1}) - f(S_i) \right] \leq \frac{T}{\eta} \left[ f(S_{i+1}) - f(S_i) \right]
\end{align*}

Since $\sigma_i$ is an injection from $\Gamma_i$ to $[i]$, the sum over $j \in \Gamma_i$ covers distinct increments, by monotonicity:
\[
\sum_{j \in \Gamma_i} \left[ f(S_{\sigma_i(j)}) - f(S_{\sigma_i(j)-1}) \right] \leq \sum_{t=1}^i [f(S_t) - f(S_{t-1})] = f(S_i)
\]

Hence $\sum_{j=1}^T \Delta_j \leq f(S_i) + \frac{T}{\eta} [f(S_{i+1}) - f(S_i)]$.
Substituting into the inequality for $f(S_T^*)$, we have 
\[
f(S_T^*) \leq 2 f(S_i) + \frac{T}{\eta} [f(S_{i+1}) - f(S_i)]
\]

Define $\delta_i = f(S_T^*) - 2 f(S_i)$. Then by the above inequality,
\[
\delta_{i+1} = \delta_i - 2 [f(S_{i+1}) - f(S_i)] \leq \delta_i - 2 \cdot \frac{\eta}{T} \delta_i = \delta_i \left(1 - \frac{2 \eta}{T}\right)
\]

Note that $\delta_0 = f(S_T^*)$ since $f(S_0) = 0$, we have:
\[
\delta_T \leq \delta_0 \left(1 - \frac{2 \eta}{T}\right)^T \leq f(S_T^*) e^{-2 \eta}
\]

Rearranging gives $f(S_T^*) \leq \frac{2}{1 - e^{-2 \eta}} f(S_T).$
\end{proof}

\subsection{Generalization of de Finetti Theorem and Related Justification}\label{appendix:theory}

This section presents the theoretical foundation for Fact~\ref{fact:definetti}, which generalizes de Finetti's theorem, and establishes the martingale condition Remark~\ref{rm:martingale} and frequentist consistency Theorem~\ref{thm:consistency} of the copula-based predictive inference framework. This justification ensures that our method provides valid uncertainty quantification under the martingale condition, extending Bayesian principles to sequential prediction settings without explicit likelihood specification~\citep{Ye2024exchangeable,wang2025adaptive}.

We begin by introducing key definitions. The conditionally identically distributed (c.i.d.) sequence serves as the foundational concept for predictive inference, generalizing exchangeable sequences through a martingale formulation that accommodates broader data-generating processes. In the following context, we shall focus on the real random variable case exclusively, and we remark here that \citep{berti2004limit} established Theorem~\ref{thm:convergence} and limit theorems for random variables taking value in general Polish space. Specifically, we fix a probability space $(\Omega,\mathbb{P},\mathcal{F})$ and consider a random variable sequence $\{ Y_{i}\}_{i=1}^{\infty}$ taking value in $\mathbb{R}$. For $y\in \mathbb{R}$, denote $P_{i}(y)=\mathbb{P}(Y_{i+1}\leq y|Y_{1:i})$. Throughout this subsection, we shall always use capital letter $P$ represent cumulative distribution function, and small letter $p$ represent distribution density function.

\begin{definition}[Conditionally identically distributed (c.i.d.) sequence]\label{def:cid}
A sequence of random variables \( Y_{1}, Y_{2}, \ldots \) is conditionally identically distributed (c.i.d.) if, for all \( i \ge 1 \) and all \( k\geq 1 \), the following holds almost surely:
\[
\mathbb{P}(Y_{i+k} \le y \mid Y_1, \ldots, Y_i) = P_i(y)
\]
where \( P_i(\cdot) \) denotes the predictive cumulative distribution  function of $Y_{i+1}$ after observing \( Y_1, \ldots, Y_i \). This implies that, conditional on the past, all future observations are identically distributed according to the current predictive distribution.
\end{definition}

The following martingale formulation of c.i.d. sequences provides the theoretical basis for the approaches in \citep{Fong2024martingale,Ye2024exchangeable} and our work. This formulation is particularly valuable as it enables the application of martingale convergence tools to predictive inference.
\begin{lemma}[Martingale formulation, \citep{berti2004limit}]
For a sequence of random variables $Y_{1:\infty}$, it is conditionally identically distributed (c.i.d.) if and only if the sequence of predictive distribution functions \( \{P_i(y)\}_{i=1}^{\infty} \) constitutes a bounded martingale for each \( y \). That is, for \( i > 1 \) and $y\in \mathbb{R}$, the following equality holds
\begin{equation}\label{eq:martingale}
\mathbb{E}[P_{i}(y) \mid Y_1, \ldots, Y_{i-1}] = P_{i-1}(y).  
\end{equation}
This martingale property ensures unbiased updating of the predictive distribution sequence, a condition termed \textit{predictive coherence} in \citep{Fong2024martingale}.
\end{lemma}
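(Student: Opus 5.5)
We prove the two directions separately, using only the tower property of conditional expectation with respect to the natural filtration $\mathcal{F}_i=\sigma(Y_1,\ldots,Y_i)$. For every $i$ and $y$ we have $P_i(y)=\mathbb{E}[\mathbf{1}\{Y_{i+1}\le y\}\mid\mathcal{F}_i]$, which takes values in $[0,1]$. So boundedness of $\{P_i(y)\}_i$ is automatic, and $P_i(y)$ is $\mathcal{F}_i$-measurable and integrable. Hence the only content of the martingale claim is the one-step identity \eqref{eq:martingale}.

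\emph{(c.i.d.\ $\Rightarrow$ martingale).} Fix $i>1$ and $y$. By the tower property,
\[
\mathbb{E}[P_i(y)\mid\mathcal{F}_{i-1}]=\mathbb{E}\big[\mathbb{E}[\mathbf{1}\{Y_{i+1}\le y\}\mid\mathcal{F}_i]\,\big|\,\mathcal{F}_{i-1}\big]=\mathbb{P}(Y_{i+1}\le y\mid\mathcal{F}_{i-1}).
\]
Applying Definition~\ref{def:cid} at time $i-1$ with $k=2$ gives $\mathbb{P}(Y_{i+1}\le y\mid\mathcal{F}_{i-1})=P_{i-1}(y)$ almost surely. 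This is exactly \eqref{eq:martingale}.

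\emph{(martingale $\Rightarrow$ c.i.d.).} Fix $i\ge1$ and $y$, and argue by induction on $k$. The case $k=1$ is the definition of $P_i$. For $k\ge2$, the tower property gives
\[
\mathbb{P}(Y_{i+k}\le y\mid\mathcal{F}_i)=\mathbb{E}\big[P_{i+k-1}(y)\mid\mathcal{F}_i\big].
\]
Iterating the martingale identity, we have $\mathbb{E}[P_{m}(y)\mid\mathcal{F}_{m-1}]=P_{m-1}(y)$ for $m=i+k-1,\ldots,i+1$. Conditioning successively on $\mathcal{F}_{i+k-2}\supseteq\cdots\supseteq\mathcal{F}_i$ then yields $\mathbb{E}[P_{i+k-1}(y)\mid\mathcal{F}_i]=P_i(y)$ almost surely. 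Equivalently, a martingale satisfies $\mathbb{E}[M_n\mid\mathcal{F}_i]=M_i$ for all $n\ge i$, applied here with $n=i+k-1$. This gives the c.i.d.\ identity for every $k$.

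The step I expect to need the most care is the measure-theoretic bookkeeping. Each identity above holds only almost surely, with a null set that depends on $(i,k,y)$. The c.i.d.\ property, however, is a statement about distribution functions, so it is naturally read as holding for all $y$ simultaneously. I would first prove both directions for $y\in\mathbb{Q}$, where a countable union of null sets is still null. I would then extend to all $y\in\mathbb{R}$ by choosing regular versions of the conditional distributions; these exist because $\mathbb{R}$ is Polish. Regular versions have right-continuous distribution functions, so both sides agree for every $y$ outside a single null set.
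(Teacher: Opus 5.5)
Your proof is correct. Both directions reduce to the tower property for the natural filtration: the forward direction applies the c.i.d.\ identity at time $i-1$ with $k=2$, and the converse iterates $\mathbb{E}[P_m(y)\mid\mathcal{F}_{m-1}]=P_{m-1}(y)$ down to $\mathcal{F}_i$. This is the standard argument behind the result the paper cites from \citet{berti2004limit}; the paper itself gives no proof. Your last step through rational $y$ and regular versions is not needed, because the definition and the lemma are both stated almost surely for each fixed $y$, but it is a valid strengthening.
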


Now we first provide a basic results that justifies the martingale assumption in the individual-level elicitation setting.
\begin{remark}
\label{rm:martingale}
Let $Y_1, Y_2, \ldots$ be an exchangeable sequence of random variables. Under the Bayesian model that first a latent entity $U$ is i.i.d drawn from a prior $\mu(U)$, and then for this fixed $U$, $\{Y_{i}\}_{i=1}^{\infty}$ are i.i.d drawn from the distribution $\mathbb{P}(\cdot|U)$, the sequence of predictive distributions $M_t(y) = \mathbb{P}(Y_{t+1}\leq y \mid Y_{1:t})$ forms a martingale with respect to the filtration $\mathcal{F}_t = \sigma(Y_{1:t})$ generated by $Y_{1:t}$. That is, for all $t \geq 1$:
\[
\mathbb{E}[M_{t+1} \mid \mathcal{F}_t] = M_t \quad \text{almost surely}.
\]
This martingale property justifies our assumption in Section~\ref{sec:theory} that the predictive distributions of underlying data generation process satisfy the martingale condition, as it demonstrates that whenever there exists a latent entity $U$ governing the observed outcomes, for example, when $Y_{1},Y_{2},\ldots$ are conditionally i.i.d given $U$. This covers the cases including political surveys, student assessments, and other group elicitation scenarios. the sequence of predictive beliefs naturally exhibits martingale behavior under coherent Bayesian updating.
\begin{proof}[Proof of Remark~\ref{rm:martingale}]
By the law of total expectation and Bayesian updating:
\begin{align*}
\mathbb{E}[M_{t+1}(y) \mid Y_{1:t}] &= \mathbb{E}[\mathbb{P}(Y_{t+2}\leq y \mid Y_{1:t+1}) \mid Y_{1:t}] \\
&= \int \mathbb{P}(Y_{t+2}\leq y\mid Y_{1:t+1}) \mathbb{P}(Y_{t+1} \mid Y_{1:t}) dY_{t+1}.
\end{align*}

Substituting the Bayesian predictive distributions gives: for each $y\in \mathbb{R}$,
\begin{align*}
\mathbb{E}[M_{t+1}(y) \mid Y_{1:t}]&= \int \left( \int \mathbb{P}(Y_{t+2}\leq y \mid U) d\mu(U \mid Y_{1:t+1}) \right) \mathbb{P}(Y_{t+1} \mid Y_{1:t}) dY_{t+1}.
\end{align*}

Using Bayes' rule for the posterior update, we have:
\begin{align*}
\mathbb{E}[M_{t+1}(y) \mid Y_{1:t}]&= \int \int \mathbb{P}(Y_{t+2}\leq y \mid U) \frac{\mathbb{P}(y_{t+1} \mid U)d\mu(U \mid Y_{1:t})}{\mathbb{P}(y_{t+1} \mid Y_{1:t})} \mathbb{P}(y_{t+1} \mid Y_{1:t}) d y_{t+1} \\
&= \int \int \mathbb{P}(Y_{t+2}\leq y \mid U) \mathbb{P}(y_{t+1} \mid U) d\mu(U \mid Y_{1:t}) dy_{t+1}.
\end{align*}

By Fubini's theorem and the fact that $\int \mathbb{P}(Y_{t+1} \mid U) dY_{t+1} = 1$, we obtain $\mathbb{P}(Y_{t+2}\leq y \mid Y_{1:t}) = p(Y_{t+1}\leq y \mid Y_{1:t}) = M_t(y)$, completing the proof.
\end{proof}
\end{remark}

The martingale property is crucial as it facilitates the application of convergence theorems. 
The following theorem justifies the existence of a limiting distribution under the martingale condition, which is called \textit{martingale posterior} in \citep{Fong2024martingale}, generalizing the traditional de Finetti theorem. While de Finetti's theorem guarantees a latent entity governing data generation under exchangeability, this extension accommodates the broader class of c.i.d. sequences, making it applicable to non-exchangeable settings commonly encountered in practical machine learning scenarios.
\begin{theorem}[\citep{berti2004limit}]\label{thm:convergence}
For a c.i.d. sequence, the sequence of predictive distributions \( P_N \) converges weakly to a limiting random probability measure \( P_\infty \) almost surely as \( N \to \infty \).
\end{theorem}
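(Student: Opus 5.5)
The argument has two parts: first show that each predictive distribution function $P_N(y)$ converges almost surely, then assemble these pointwise limits into a random probability measure $P_\infty$ and upgrade to weak convergence.

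\textbf{Pointwise convergence.} Fix $y\in\mathbb{R}$. By the martingale formulation of c.i.d.\ sequences (Eq.~\eqref{eq:martingale}), the sequence $\{P_N(y)\}_{N\ge1}$ is a martingale with respect to $\mathcal{F}_N=\sigma(Y_{1:N})$. It takes values in $[0,1]$, so it is bounded in $L^1$ (indeed in $L^2$). Doob's martingale convergence theorem then gives a random variable $F(y)$ with $P_N(y)\to F(y)$ almost surely, and also in $L^1$.

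\textbf{Building the limit distribution function.} Let $\mathbb{Q}$ be the rationals and take a countable union of null sets, so that on a single full-measure event $\Omega_0$ we have $P_N(q)\to F(q)$ for every $q\in\mathbb{Q}$. On $\Omega_0$:
\begin{itemize}
\item Each $P_N$ is nondecreasing in $y$, so $F$ restricted to $\mathbb{Q}$ is nondecreasing with values in $[0,1]$.
\item Define $F_\infty(y)=\inf_{q>y,\,q\in\mathbb{Q}}F(q)$. This is right-continuous and nondecreasing.
\item A standard Helly-type argument then shows $P_N(y)\to F_\infty(y)$ at every continuity point $y$ of $F_\infty$.
\end{itemize}

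\textbf{Main obstacle: no mass escapes to $\pm\infty$.} We must show $F_\infty$ is a proper distribution function, i.e.\ $F_\infty(y)\to 0$ as $y\to-\infty$ and $F_\infty(y)\to 1$ as $y\to+\infty$; otherwise the limit is only a sub-probability measure. I would use the c.i.d.\ property directly:
\begin{itemize}
\item Every $Y_i$ has the same marginal law $P_0$, since $\mathbb{E}[P_N(y)]=\mathbb{P}(Y_{N+1}\le y)=P_0(y)$.
\item By $L^1$ convergence, $\mathbb{E}[F(q)]=P_0(q)$ for each rational $q$.
\item $F(q)$ is monotone in $q$ and bounded, so monotone convergence gives $\mathbb{E}\bigl[\lim_{q\to\infty}F(q)\bigr]=\lim_{q\to\infty}P_0(q)=1$.
\item Since $\lim_{q\to\infty}F(q)\le 1$ and its expectation equals $1$, it equals $1$ almost surely. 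The argument at $-\infty$ is symmetric and gives $0$ almost surely.
\end{itemize}
Removing these additional null sets from $\Omega_0$, $F_\infty$ is almost surely a genuine distribution function. Let $P_\infty$ be the random probability measure it defines.

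\textbf{Conclusion.} Convergence of distribution functions at all continuity points is equivalent to weak convergence on $\mathbb{R}$. Hence $P_N\Rightarrow P_\infty$ almost surely.

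For a general Polish space $E$, I would instead take a countable convergence-determining class of bounded continuous functions $g$. Each sequence $\int g\,dP_N$ is a bounded martingale, so it converges almost surely. Tightness follows from the identical marginals, as in the tail argument above, and Prokhorov's theorem then identifies the limit measure, as in \citep{berti2004limit}.
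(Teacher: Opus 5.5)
Your proof is correct. The paper gives no argument of its own for this statement and simply imports it from \citep{berti2004limit}. There it is obtained for c.i.d.\ sequences in a general Polish space, as part of a broader limit theory in which the same random measure $P_\infty$ is also the almost-sure weak limit of the empirical measures $\frac{1}{N}\sum_{i\le N}\delta_{Y_i}$.

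You instead give a self-contained proof for the real-valued case, which is the case the appendix explicitly restricts to. It has three steps: Doob's theorem on the rationals, a Helly-type reconstruction of a right-continuous $F_\infty$, and the step that genuinely needs care. That step is showing $\mathbb{E}[P_N(y)]$ is a fixed proper distribution function, which, through $L^1$ convergence and monotone convergence, rules out escape of mass to $\pm\infty$.

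Your route shows exactly where the martingale property is used. The closed-martingale structure you already have also yields, with one more line, $P_N(y)=\mathbb{E}[P_\infty((-\infty,y])\mid\mathcal{F}_N]$. That is, the predictive is the posterior mean of its own limit, which is the ``martingale posterior'' reading the paper relies on.

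What the citation adds is generality, plus the identification of $P_\infty$ with the limiting empirical distribution of the observations. That identification is closer to what the paper's talk of recovering the latent entity from $p_\infty$ actually needs; bare existence of the limit does not give it.

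Three small tidy-ups.

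First, the paper's Definition~\ref{def:cid} only conditions on $Y_{1:i}$ with $i\ge1$. So it yields a common law $\nu$ for $Y_2,Y_3,\dots$ but not necessarily for $Y_1$. Your tail argument only needs $\mathbb{E}[P_N(y)]$ to be constant in $N\ge1$, which the martingale property gives, so nothing breaks.

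Second, you should say that $\omega\mapsto P_\infty(\omega,A)$ is measurable. $F_\infty(y)$ is a countable infimum of the measurable $F(q)$, a $\pi$--$\lambda$ argument extends this to Borel $A$, and on the exceptional null set you set $P_\infty$ equal to a fixed law.

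Third, in the Polish sketch, ``tightness from identical marginals'' is precisely your tail argument applied to the bounded martingales $P_N(K_m^c)$, for increasing compacts $K_m$ with $\nu(K_m^c)\to0$. Their almost-sure limits $V_m$ decrease in $m$ and satisfy $\mathbb{E}V_m=\nu(K_m^c)$, so $V_m\to0$ almost surely. This gives almost-sure tightness of $\{P_N(\omega)\}_{N\ge1}$.
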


As demonstrated in the following Theorem~\ref{thm:consistency}, in settings where a latent entity governs data generation, this limiting distribution effectively recovers the distribution density $\mathbb{P}(\cdot|U)$ induced by the latent entity.

\paragraph{Copula-based computational scheme} 
To operationalize this theoretical framework, \citep{Fong2024martingale} established that any predictive distribution sequence satisfying the martingale condition in Eq.~\eqref{eq:martingale} necessarily arises from a copula-based updating rule. This provides a constructive method for building valid predictive inference machines. We now introduce the bivariate copula concept before presenting the key lemma.

\begin{definition}[Bivariate Copula]
A bivariate copula \( C: [0,1]^2 \rightarrow [0,1] \) is a cumulative distribution function on the unit square with uniform marginal distributions. Its density is denoted \( c(u, v) \). Sklar's Theorem establishes that any bivariate distribution can be expressed through its marginals and a copula capturing the dependence structure.
\end{definition}
In the copula construction, we assume that each predictive distribution admits a density $p_i$
 and a continuous CDF $P_i$.
\begin{lemma}[Corollary 1 in \citep{Fong2024martingale}]\label{lm:martingale1}
The sequence of conditional densities $p_0, p_1, \ldots$ satisfies the martingale condition Eq.~\eqref{eq:martingale} if and only if there exists a unique sequence of bivariate copula densities $c_1, c_2, \ldots$ such that
\[
p_{i+1}(y) = c_{i+1}\{P_i(y), P_i(y_{i+1})\} p_i(y)
\]
for $i \in \{0,1,\ldots\}$, where $P_i$ represents the cumulative distribution function corresponding to $p_i$.
\end{lemma}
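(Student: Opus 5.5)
The statement to prove is Lemma~\ref{lm:martingale1}: the sequence $p_0,p_1,\ldots$ satisfies the martingale condition exactly when it is generated by a unique sequence of copula densities. I would prove both directions by writing the one-step update as a ratio of densities and recognizing that ratio as a copula density. Throughout I assume the standing convention that each $P_i$ is continuous with density $p_i$. I also assume the one-step-ahead joint law of $(Y_{i+1},Y_{i+2})$ given $Y_{1:i}$ admits a density $p_i(y_{i+1},y)$. This is the object whose copula I extract.

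\textbf{Martingale $\Rightarrow$ copula form.} Fix $i$ and condition on $Y_{1:i}$. By definition $p_{i+1}(y)=p_i(y_{i+1},y)/p_i(y_{i+1})$, the conditional density of $Y_{i+2}$ given $Y_{i+1}=y_{i+1}$.

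Integrating the martingale identity Eq.~\eqref{eq:martingale} in $y$ gives
\[
\int p_{i+1}(y)\,p_i(y_{i+1})\,dy_{i+1}=p_i(y).
\]
So the bivariate density $p_i(y_{i+1},y)$ has both marginals equal to $p_i$; this is the c.i.d.\ property at one step.

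Sklar's theorem then writes
\[
p_i(y_{i+1},y)=c_{i+1}\{P_i(y),P_i(y_{i+1})\}\,p_i(y)\,p_i(y_{i+1}),
\]
with $c_{i+1}$ a copula density. Dividing by $p_i(y_{i+1})$ gives the claimed update. Uniqueness follows because $P_i$ is continuous, so Sklar's copula is unique on $\mathrm{Ran}(P_i)^2=[0,1]^2$.

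\textbf{Copula form $\Rightarrow$ martingale.} Given the update rule, I compute
\[
\mathbb{E}[p_{i+1}(y)\mid Y_{1:i}]=p_i(y)\int c_{i+1}\{P_i(y),P_i(t)\}\,p_i(t)\,dt .
\]
Substituting $v=P_i(t)$, so that $dv=p_i(t)\,dt$, turns the integral into $\int_0^1 c_{i+1}(u,v)\,dv=1$, since copula densities have uniform marginals. Hence the density sequence is a martingale. Integrating in $y$ up to $y$ (Fubini) yields the CDF martingale Eq.~\eqref{eq:martingale}.

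\textbf{Main obstacle.} The converse direction is the subtle step. The martingale condition only asserts equality of expectations of $P_{i+1}$, and I must promote this to ``the one-step bivariate law has identical marginals'' and then to a density statement. I would handle this by differentiating Eq.~\eqref{eq:martingale} in $y$, justified by dominated convergence under the density assumptions. Some care is also needed on null sets where $p_i(y_{i+1})=0$; there I would define $c_{i+1}$ arbitrarily (for example, as the independence copula), which does not affect the a.s.\ statement.
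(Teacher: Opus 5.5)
Your proof is correct and is essentially the standard argument behind the cited Corollary~1 of Fong et al.\ (the paper itself only cites the result and gives no proof): the martingale condition is equivalent to the joint density $p_i(y_{i+1})\,p_{i+1}(y)$ of $(Y_{i+1},Y_{i+2})$ given $Y_{1:i}$ having both marginals equal to $p_i$, Sklar's theorem then gives the unique copula density $c_{i+1}$ (which, as in Fong et al., may depend on $y_{1:i}$), and the converse follows from the uniform marginals of $c_{i+1}$. One small repair: passing from the CDF identity to $\int p_{i+1}(y)\,p_i(y_{i+1})\,dy_{i+1}=p_i(y)$ is a differentiation, not an integration, and dominated convergence is the wrong tool since no dominating function is available in general; instead use Tonelli to write $\int P_{i+1}(y)\,p_i(y_{i+1})\,dy_{i+1}=\int_{-\infty}^{y}\int p_{i+1}(s)\,p_i(y_{i+1})\,dy_{i+1}\,ds$, so the two densities define measures that agree on all half-lines and hence coincide almost everywhere.
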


Lemma~\ref{lm:martingale1} guarantees that the following copula-based update rule satisfies the martingale condition in Eq.~\eqref{eq:martingale}. The Gaussian copula provides a practical choice due to its parametric simplicity and flexibility in modeling dependencies.
Specifically, let \( \{\alpha_{n}\}_{n=1}^{\infty} \) be a deterministic learning rate sequence with \( \alpha_n = \mathcal{O}(1/n) \), and let \( c_\rho \) denote the density of the Gaussian copula with correlation parameter \( \rho \). Denoting by \( P_i(\cdot) \) the cumulative distribution function corresponding to \( p_i(\cdot) \), the following update rule after observing \( y_{i+1} \) induces a predictive distribution sequence satisfying the martingale condition:
\begin{equation}\label{eq:martingale1}
    p_{i+1}(y) = \left[1 - \alpha_{i+1} + \alpha_{i+1} \, c_\rho\{P_i(y), P_i(y_{i+1})\} \right] p_i(y).
\end{equation}

The following  Theorem~\ref{thm:consistency} establishes the frequentist consistency of the predictive density \( p_n \) estimated via the copula method. It demonstrates that \( p_n \) converges to the true data-generating density \( f_0 \) when data \( Y_{1:n} \stackrel{\text{iid}}{\sim} f_0 \). This consistency result validates predictive inference under the martingale condition, ensuring that we can recover the latent entity from the limiting distribution. The combination of Theorems~\ref{thm:convergence} and~\ref{thm:consistency} guarantees that our predictive inference framework provides a valid approach for latent entity recovery and uncertainty quantification through predictive sampling. Specifically, Theorem~\ref{thm:convergence} generalizes de Finetti’s representation by ensuring the existence of a limiting martingale posterior for c.i.d. sequences, while Theorem~\ref{thm:consistency} ensures that this posterior converges to the true data-generating distribution, thereby if this response law is the distribution induced by a latent entity $U$, then the method recovers $U$ up to its induced predictive distribution $\mathbb{P}(\cdot|U)$.
\begin{theorem}
\label{thm:consistency}
Assume $Y_{1},Y_{2},\ldots\sim f_0$ where $f_0$ is a density function on $\mathbb{R}$, then under the following conditions:

    $(a)$ \( \rho \in (0,1) \) and the learning rate sequence is \( \alpha_i = a(i+1)^{-1} \) with \( a < 2/5 \),
    
    $(b)$ There exists \( B < \infty \) such that \( f_0(y) / p_0(y) \leq B \) for all \( y \in \mathbb{R} \) (i.e., the initial \( p_0 \) has tails at least as heavy as \( f_0 \)),

the sequence \( p_n \) given by the updating rule Eq.~\eqref{eq:martingale1} is Hellinger consistent at \( f_0 \), that is,
\[
\lim_{n \to \infty} H^2(p_n, f_0) = 0 \quad \text{almost surely},
\]
where the squared Hellinger distance is defined as \( H^2(p, f) = 1 - \int \sqrt{p(y)f(y)}  dy \).
\end{theorem}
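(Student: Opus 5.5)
The plan follows the recursive Kullback--Leibler argument of Hahn--Martin--Walker and Fong et al.: track the divergence of $p_n$ from $f_0$ and show it behaves like an almost-supermartingale under the \emph{true} law $Y_i\stackrel{\text{iid}}{\sim} f_0$. Set
\[
K_n=\int f_0(y)\log\frac{f_0(y)}{p_n(y)}\,dy ,
\]
which is finite at $n=0$ by (b). Taking logs in Eq.~\eqref{eq:martingale1} gives
\[
\log p_{n+1}(y)=\log p_n(y)+\log\bigl(1+\alpha_{n+1}\{c_\rho(P_n(y),P_n(Y_{n+1}))-1\}\bigr).
\]
Since $c_\rho>0$, the argument of the logarithm is at least $1-\alpha_{n+1}\ge 1/2$ for $a<2/5$. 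We may therefore use $\log(1+x)\ge x-x^2$ for $x\ge -1/2$. Integrating against $f_0(y)$ and conditioning on $\mathcal{F}_n=\sigma(Y_{1:n})$ should give
\[
\mathbb{E}[K_{n+1}\mid\mathcal{F}_n]\le K_n-\alpha_{n+1}T_n+\alpha_{n+1}^2R_n .
\]
Here $T_n=\iint f_0(y)f_0(z)\{c_\rho(P_n(y),P_n(z))-1\}\,dy\,dz$, and $R_n$ is the corresponding second-moment term.

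\textbf{Step 1 (the drift term is nonnegative).} Change variables to $u=P_n(y)$ and set $h_n(u)=f_0/p_n\circ P_n^{-1}$, a probability density on $[0,1]$. Mehler's expansion of the Gaussian copula, $c_\rho(u,v)=\sum_{k\ge0}\rho^k\phi_k(u)\phi_k(v)$ with $\phi_k$ the normalized Hermite polynomials composed with $\Phi^{-1}$, then gives
\[
T_n=\sum_{k\ge1}\rho^k\Bigl(\int_0^1 h_n\phi_k\Bigr)^2\ge0 .
\]
This uses $\rho\in(0,1)$ from (a). Moreover $T_n=0$ forces $h_n\equiv1$, i.e.\ $p_n=f_0$.

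\textbf{Step 2 (controlling the remainder).} Iterating the update gives $p_n\ge p_0\prod_{i\le n}(1-\alpha_i)$, so by (b)
\[
\sup h_n\le B\prod_{i\le n}(1-\alpha_i)^{-1}\lesssim B\,n^{a}.
\]
Together with $\iint c_\rho^2<\infty$ for $\rho<1$, this should yield $R_n\lesssim n^{2a}$, possibly with a slightly larger power once cross terms are handled. Then $\sum_n\alpha_{n+1}^2R_n\lesssim\sum_n n^{-2+2a+\epsilon}<\infty$, and the restriction $a<2/5$ leaves the needed slack. The Robbins--Siegmund theorem then gives $K_n\to K_\infty$ a.s.\ and $\sum_n\alpha_{n+1}T_n<\infty$ a.s. Since $\sum\alpha_n=\infty$, it follows that $\liminf_n T_n=0$.

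\textbf{Step 3 (main obstacle: upgrading $\liminf T_n=0$ to Hellinger convergence).} I would first show that $T_n$ varies slowly, $|T_{n+1}-T_n|\le C\alpha_{n+1}n^{\gamma}$ with $\gamma$ small, again using the $n^{a}$ bound on $h_n$. Combined with $\sum\alpha_nT_n<\infty$, this gives $T_n\to0$ along the full sequence. Next, $T_n\to0$ forces every Hermite coefficient of $h_n$ to vanish, so $h_n\to1$ weakly on $[0,1]$. Using the a.s.\ weak convergence $P_n\to P_\infty$ from Theorem~\ref{thm:convergence}, this identifies $P_\infty$ with the law $F_0$ of $f_0$. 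The remaining difficulty is turning weak closeness into $H^2(p_n,f_0)\to0$. I would use $H^2\le K$ along a subsequence where $K_n$ is small, after showing $K_\infty=0$ via lower semicontinuity of the KL divergence together with the uniform integrability supplied by $\sup h_n\lesssim n^{a}$. This last step is the delicate one. If it fails, I would instead follow Fong et al.\ and run the same supermartingale argument directly for the Hellinger affinity $\int\sqrt{p_nf_0}$ in place of $K_n$.
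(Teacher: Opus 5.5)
The paper states this theorem without proof. It is imported from the copula-based martingale-posterior literature and used as a black box, so your argument has to stand on its own.

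\textbf{Steps~1 and~2 are sound.} The inequality $\log(1+x)\ge x-x^2$ on $[-\tfrac12,\infty)$, the Mehler identity $T_n=\sum_{k\ge1}\rho^k\langle h_n,\phi_k\rangle^2\ge0$, the bound $\sup h_n\le B\prod_{i\le n}(1-\alpha_i)^{-1}\asymp Bn^{a}$, and $R_n\le(\sup h_n)^2\rho^2/(1-\rho^2)$ (no extra power appears) give, via Robbins--Siegmund, $K_n\to K_\infty$ and $\sum_n\alpha_{n+1}T_n<\infty$ almost surely. This part uses only $a<1/2$.

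\textbf{The genuine gap is Step~3}, the passage from $\liminf_nT_n=0$ to $K_\infty=0$. In the $u$-coordinates, $K_n=\int_0^1h_n\log h_n\,du$. By contrast, $T_n=\sum_{k\ge1}\rho^k\langle h_n-1,\phi_k\rangle^2$ sees $h_n-1$ only through its Hermite coefficients damped by $\rho^k$, i.e.\ after Mehler smoothing. For general densities such a quantity cannot control $K_n$. Take $g_m(u)=1+\tfrac12\sin(2\pi mu)$: then $\sum_{k\ge1}\rho^k\langle g_m-1,\phi_k\rangle^2\to0$ as $m\to\infty$ (Riemann--Lebesgue for each $k$, dominated convergence in $k$), while $\int_0^1g_m\log g_m\,du$ is the same positive constant for every integer $m$. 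So along a subsequence with $T_{n_j}\to0$ you do get $h_{n_j}\,du\Rightarrow du$ (equivalently $\sup_y|P_{n_j}(y)-F_0(y)|\to0$), but not $K_{n_j}\to0$. Establishing $T_n\to0$ along the full sequence by slow variation would not change this. That slow-variation bound is also unproved, and it is delicate because $c_\rho$ is not Lipschitz near the edges of $[0,1]^2$.

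\textbf{The tools you name for the upgrade do not close the gap.} Lower semicontinuity of the KL divergence under weak convergence gives $\liminf_jK_{n_j}\ge0$, which is the trivial direction. The bound $\sup h_n\lesssim n^{a}$ is a \emph{growing} bound, so it gives no uniform integrability of $h_{n_j}\log h_{n_j}$. The example above shows that even a uniform bound on $h_n$ would not suffice. What is missing is one of the following:
\begin{itemize}
\item an ingredient specific to the recursion that rules out persistent oscillation or concentration of $f_0/p_n$, for instance strong $L^1$-compactness of $(h_{n_j})$ together with uniform integrability;
\item or a Lyapunov functional whose drift dominates the divergence itself rather than a smoothed version of it.
\end{itemize}

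\textbf{Your fallbacks do not supply this.}
\begin{itemize}
\item Theorem~\ref{thm:convergence} cannot be invoked here. Under $Y_i\stackrel{\text{iid}}{\sim}f_0$ the copula predictives are not the conditional laws of the data, so $\{P_n(y)\}$ is not a martingale and the c.i.d.\ limit theorem says nothing about it.
\item The Hellinger affinity does not admit ``the same'' argument. The first-order drift of $\int\sqrt{f_0p_n}$ is $\tfrac12\alpha_{n+1}\sum_{k\ge1}\rho^k\langle\sqrt{h_n},\phi_k\rangle\langle h_n,\phi_k\rangle$, which is not a sum of squares and can be negative. For example, take a density on $[0,1]$ with a tall narrow spike near $u=1$ and the remaining mass on $[0,\tfrac12]$. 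The two $k=1$ factors then have opposite signs, and this term dominates for small $\rho$.
\end{itemize}
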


For the i.i.d. data generation process  with ground truth density $f_0$, the distribution induced by the true latent entity is identified with the distribution with density $f_0$. Theorem~\ref{thm:consistency} shows that the copula predictive density $p_n$ converges to $f_0$ in Hellinger distance almost surely. Therefore, in this setting, the predictive update recovers the latent entity $U$ at the level of its induced distribution.

\section{Experiment Details}

\subsection{Datasets\label{appendix:dataset}}

We evaluate group adaptive elicitation on three real-world opinion datasets spanning political attitudes, social values, and economic preferences. All datasets consist of real human responses collected via probability-based or nationally representative survey panels. Below, we summarize each dataset with respect to its data source, preprocessing procedures, demographic attributes, question selection, and experimental usage.

\paragraph{\textsc{CES}.}
The \emph{Cooperative Election Study} (\textsc{CES})\footnote{\url{https://dataverse.harvard.edu/dataset.xhtml?persistentId=doi:10.7910/DVN/CETPVT}} is one of the largest academic surveys of the American public and a core resource for studying U.S.\ political attitudes and electoral behavior. We use the 2024 wave of the CES and filter out respondents with missing values in either demographic attributes or selected opinion questions, resulting in a final sample of 3{,}326 respondents. We retain eight demographic variables capturing age cohort, race, gender, education, family income, religion, party identification, and political ideology. From the survey, we extract 30 policy questions, each formulated as a binary stance indicating whether the respondent \emph{supports} or \emph{opposes} a given policy. The selected questions span major policy domains, including health care, gun control, immigration, abortion, climate policy, policing and public safety, and executive authority (e.g., executive orders). These questions are chosen to ensure topical diversity, substantial response variation, and a clear stance-based interpretation suitable for adaptive elicitation.

\paragraph{\textsc{OpinionQA}.}
\textsc{OpinionQA} is derived from the \emph{American Trends Panel} (ATP)\footnote{\url{https://www.pewresearch.org/american-trends-panel-datasets/}}, Pew Research Center’s primary probability-based panel for U.S.\ public opinion research, which consists of approximately 10{,}000 adults recruited to be nationally representative of the U.S.\ population, with surveys administered in English and Spanish. We use three ATP waves—W50, W54, and W92—covering topics related to American families, economic inequality, and political typology. After filtering respondents with missing demographic information or incomplete question responses, the resulting dataset contains 2{,}368 respondents. We include eight standardized demographic variables capturing age category, gender, education, party affiliation, political ideology, income, religion, and race/ethnicity. We extract 30 opinion questions, each with 3--5 discrete response options (e.g., degrees of agreement or categorical preferences), enabling evaluation of adaptive elicitation beyond binary support--oppose judgments.

\paragraph{\textsc{Twin-2K}.}
\textsc{Twin-2K}\footnote{\url{https://huggingface.co/datasets/LLM-Digital-Twin/Twin-2K-500}} is a four-wave, nationally representative U.S.\ panel fielded in January--February 2025 on Prolific, designed for studying LLM-based human simulation. Participants complete questions spanning demographics, personality scales, cognitive ability tests, economic preferences, and heuristics-and-biases experiments. We exclude respondents with missing demographic attributes or incomplete responses on selected questions, yielding a final sample of 2{,}368 respondents. Demographic variables include age, sex, education, political party, political ideology, income, religion, and race. From the full survey, we select 40 questions primarily related to economic preferences and behavioral decision-making, which are well-suited for evaluating whether adaptive elicitation can infer latent economic and behavioral traits from limited observations.


\subsection{Evaluation Metrics\label{appendix:metrics}}

\paragraph{Accuracy.}
Let $p_i \in \mathbb{R}^C$ denote the predicted probability distribution over $C$ answer options for instance $i$, and let $y_i \in \{1,\dots,C\}$ be the ground-truth class index.
Accuracy is defined as:
\begin{equation}
\mathrm{Acc}
= \frac{1}{N}
\sum_{i=1}^{N}
\mathbb{I}\!\left[
\arg\max_{c} \; p_{i,c} = y_i
\right],
\end{equation}
where $N$ is the number of evaluated instances and $\mathbb{I}[\cdot]$ is the indicator function.

\paragraph{Perplexity (PPL).}
Perplexity is defined as the exponential of the average negative log-likelihood:
\begin{equation}
\mathrm{PPL}
= \exp\!\left(
    - \frac{1}{N}
    \sum_{i=1}^{N}
    \log p_i(y_i)
\right),
\end{equation}
where $p_i(y_i)$ denotes the predicted probability assigned to the ground-truth option. Note that this differs from standard token-level language model perplexity; here PPL measures uncertainty over discrete choices.

\paragraph{Brier Score (BS).}
The Brier score measures the squared error between the predicted probability distribution and the one-hot encoded ground truth:
\begin{equation}
\mathrm{BS}
= \frac{1}{N}
\sum_{i=1}^{N}
\sum_{c=1}^{C}
\left(
    p_{i,c} - y_{i,c}
\right)^2,
\end{equation}
where $\mathbf{y}_i \in \{0,1\}^C$ is the one-hot representation of $y_i$.

\subsection{Respondent Sensitivity Definition\label{appendix:sensitivity}}

Group-level propagation does not benefit all respondents equally: observing the ground-truth responses of some individuals yields substantially larger gains than observing others. To characterize this heterogeneity, we introduce a notion of \emph{sensitivity}, which quantifies how informative a respondent is for downstream inference.

\paragraph{Definition.}
We define a respondent’s sensitivity as the improvement in predictive performance attributable to observing that respondent’s ground-truth responses, relative to an imputation-only baseline. Concretely, for each respondent $u$, we compute
\begin{equation}
\text{Sensitivity}(u) \;=\; \text{Acc}_{\text{full}}(u) - \text{Acc}_{\text{impute}}(u),
\end{equation}
where $\text{Acc}_{\text{full}}(u)$ denotes the per-user prediction accuracy when all responses from $u$ are observed (full-observation setting), and $\text{Acc}_{\text{impute}}(u)$ denotes the accuracy when none of $u$’s responses are observed and predictions rely solely on group-level imputation. This difference captures the \emph{marginal utility} of observing respondent $u$ beyond what can be inferred from others.

\paragraph{Estimation protocol.}
Sensitivity is estimated using an independently trained LLM-based predictor. For each dataset and random seed, we evaluate per-user accuracies under two controlled settings: (i) a \emph{full-observation} setting, in which all responses from a given user are revealed, and (ii) an \emph{imputation-only} setting, in which the same user is entirely unobserved. We compute the accuracy gap for each user and average results across multiple random seeds to reduce variance. Users are then ranked by their sensitivity scores in descending order.

\paragraph{Sensitivity tiers and Interpretation.}
To facilitate analysis, we stratify respondents into sensitivity tiers based on percentile ranks of the accuracy gap distribution. Specifically, we consider the top 50\%, 30\%, 10\%, and 5\% most sensitive respondents, corresponding to users for whom observing ground-truth responses yields the largest performance improvements. These tiers allow us to study where adaptive elicitation and group-relational respondent selection provide the greatest benefits, and to contrast performance on highly sensitive respondents against the broader group. Intuitively, high-sensitivity respondents are those whose preferences are poorly captured by group-level propagation alone—either because they are atypical, weakly connected, or exhibit high uncertainty—such that direct observation substantially improves inference. In contrast, low-sensitivity respondents are well-explained by relational structure and contribute limited additional information when observed. This stratification enables a more fine-grained evaluation of adaptive elicitation strategies beyond aggregate accuracy.

\begin{figure*}[!t]
\centering
\begin{subfigure}[t]{0.95\linewidth}
    \centering
    \includegraphics[width=\linewidth]{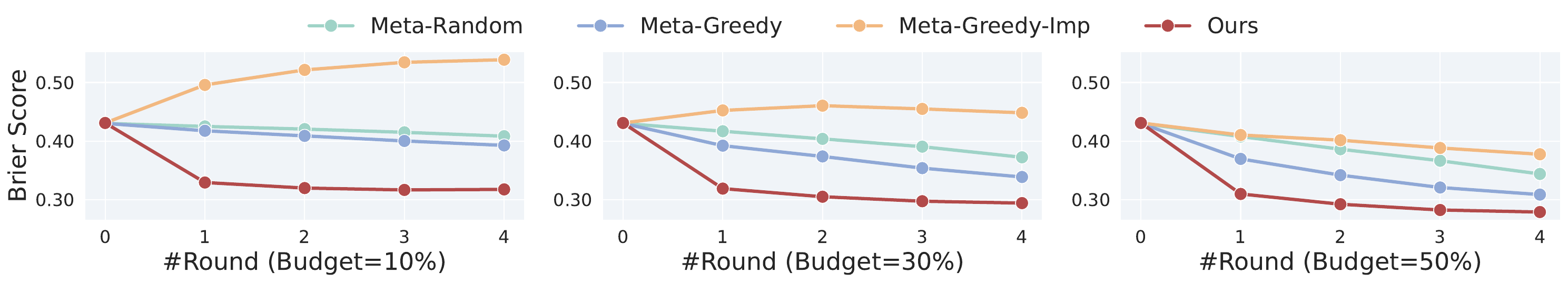}
    \caption{CES}
    \label{fig:bs_ces}
\end{subfigure}
\hfill
\begin{subfigure}[t]{0.95\linewidth}
    \centering
    \includegraphics[width=\linewidth]{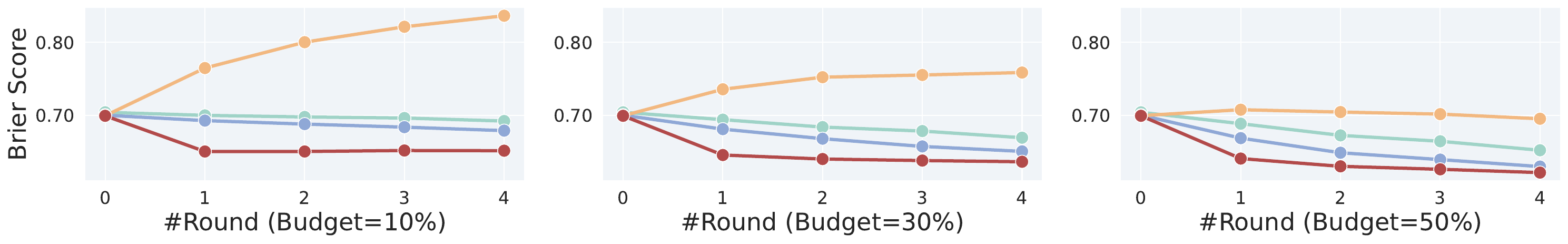}
    \caption{OpinionQA}
    \label{fig:bs_qa}
\end{subfigure}
\vspace{-0.5em}
\caption{Calibration Performance (Brier Score) across Elicitation Rounds on \textsc{CES} and \textsc{OpinionQA}.\label{fig:bs_main}}
\vspace{-1.5em}
\end{figure*}

\subsection{Implementation Details\label{appendix:implement}}

All experiments were conducted on a single machine equipped with 8 NVIDIA RTX 6000 Ada Generation GPUs (48 GB memory each) using CUDA 12.9.

\paragraph{LLM Meta-Training.}
We split the training data by entity at the respondent level, using an 80\%/20\% split for training and validation in the in-distribution region (South), while reserving all users from the out-of-distribution region (West)  exclusively for testing. To meta-train our query selection model, we initialize a pretrained {Llama-3.1-8B} model and fine-tune it using parameter-efficient adaptation with LoRA, with rank $r=8$, scaling factor $\alpha=24$, and dropout rate $0.01$, applied to the attention projection layers. Optimization is performed using AdamW with learning rate $1\times10^{-4}$, $\beta=(0.9,0.95)$, and weight decay $0.1$. Training is conducted in mixed precision ({BF16}) with gradient accumulation to achieve an effective batch size of 16, a maximum sequence length of 1024 tokens, gradient clipping at 1.0, and a cosine learning-rate schedule with linear warmup. The model is trained for 10{,}000 iterations with the final checkpoint selected based on the lowest validation loss. For robustness, we additionally meta-train a smaller {Llama-3.2-1B} model under the same framework, using full-parameter fine-tuning, batch size 8, and a reduced weight decay of 0.01, while keeping the learning rate, optimizer settings, sequence length, precision, scheduling strategy, and evaluation protocol unchanged.

\paragraph{Heterogeneous GNN.}
We train the GNN imputation module on a heterogeneous opinion graph comprising three node types: \emph{group member} nodes (respondents), \emph{feature} nodes (demographic), and \emph{query--choice} nodes (answer options). To prevent information leakage, all splits are performed at the user level, assigning all responses from a respondent to the same split. For geographic generalization, region-based filtering is applied prior to splitting, with the South region treated as in-distribution and the West region as out-of-distribution (OOD). Within the in-distribution region, users are randomly partitioned into training and validation sets with an 80/20 split using a fixed random seed, while all users from the OOD region are held out exclusively for testing. During training, we simulate partial observability via an epoch-wise user-level masking protocol: in each epoch, training users are divided into three disjoint groups, including fully observed users whose edges are retained for message passing, partially observed users for whom a fixed fraction (50\%) of edges are masked and used as supervision targets, and cold-start users whose edges are entirely withheld for supervision. Validation and test splits are evaluated under a strict cold-start setting, where all user--query edges are removed from the message-passing graph. Each supervision instance corresponds to a multi-class classification problem over the candidate option set of the associated query. Unless otherwise specified, we use a hidden dimension of 64, two RGCN layers, dropout rate 0.1, batch size 2048, learning rate $2\times10^{-3}$ with weight decay $10^{-4}$, and train for 500 epochs with a fixed random seed.

\begin{figure*}[!t]
\centering
\begin{subfigure}[t]{0.95\linewidth}
    \centering
    \includegraphics[width=\linewidth]{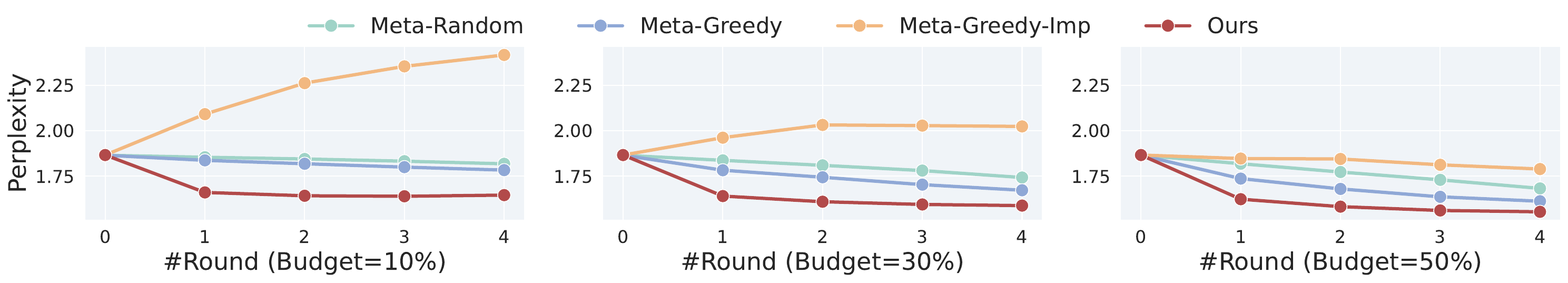}
    \caption{CES}
\end{subfigure}
\hfill
\begin{subfigure}[t]{0.95\linewidth}
    \centering
    \includegraphics[width=\linewidth]{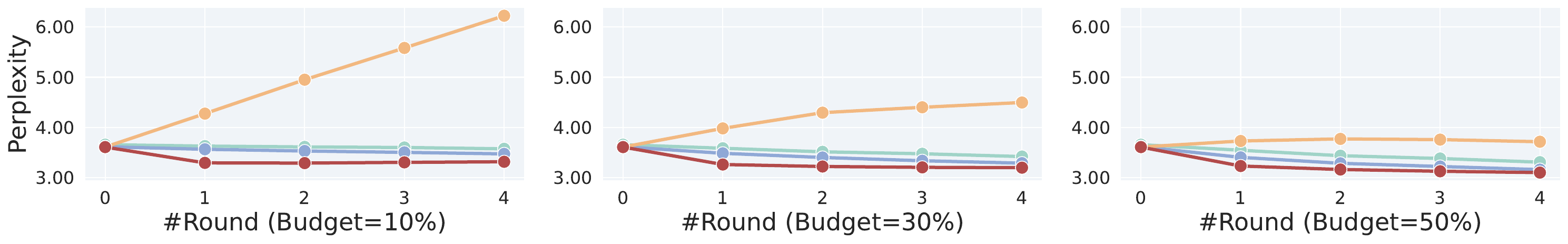}
    \caption{OpinionQA}
\end{subfigure}
\vspace{-0.5em}
\caption{Calibration Performance (Perplexity) across Elicitation Rounds on \textsc{CES} and \textsc{OpinionQA}.\label{fig:ppl_main}}
\end{figure*}

\section{Additional Experiment Results}

\subsection{Calibration Results\label{appendix:calibration-results}}

Figure~\ref{fig:bs_main} and~\ref{fig:ppl_main} reports calibration results on \textsc{CES} and \textsc{OpinionQA}, measured by Brier Score  and Perplexity  as a function of elicitation rounds under different observation budgets (10\%, 30\%, and 50\%). Lower values indicate better calibration. Across both datasets and all budgets, our method consistently achieves the lowest Brier scores and perplexities after the first elicitation round, and the gap widens as more rounds are performed. This indicates that adaptive elicitation not only improves point accuracy but also produces better-calibrated predictive distributions. In contrast, \emph{Meta-Greedy-Imp}, which relies on {LLM-independent imputation} without group-relational respondent selection or propagation, often exhibits degraded calibration as the number of elicitation rounds increases, particularly under low-budget settings. This degradation is most pronounced in perplexity, indicating overconfident predictions arising from the propagation of uncertain or weakly informative signals. \emph{Meta-Random} and \emph{Meta-Greedy} show modest but consistent improvements with additional rounds, but remain substantially worse than our approach.

Overall, these results demonstrate that group-relational respondent selection is crucial for calibration: observing highly informative respondents early leads to uncertainty reduction that propagates reliably across the population, whereas indiscriminate propagation can amplify miscalibrated beliefs. The trends are consistent across datasets and budgets, confirming that the calibration gains of our method are robust and not an artifact of a specific domain.

\begin{table*}[!t]
\centering
\caption{Accuracy across sensitivity tiers. Left:  \textsc{CES}; Right: \textsc{OpinionQA}.\label{tab:node_selection}}
\begin{minipage}{0.49\linewidth}
\centering
\vspace{-0.5em}
\small
\resizebox{\linewidth}{!}{
\begin{tabular}{@{}llccccc@{}}
\toprule
\textbf{Budget} & \textbf{Method} & \textbf{Global} & \textbf{Broad} & \textbf{Inter.} & \textbf{Hard} & \textbf{Extreme} \\
& (Top \%) & (100\%) & (50\%) & (30\%) & (10\%) & (5\%) \\ 
\midrule
$0\%$ & Full Imp. & 0.760 & 0.705 & 0.635 & 0.477 & 0.420 \\
\midrule
$30\%$ & Random & 0.784 & 0.773 & 0.737 & 0.641 & 0.617 \\
       & \textbf{Group-relational} & \textbf{0.790} & \textbf{0.782} & \textbf{0.751} & \textbf{0.684} & \textbf{0.690} \\
\midrule
$50\%$ & Random & 0.793 & 0.795 & 0.770 & 0.714 & 0.720 \\
       & \textbf{Group-relational} & \textbf{0.801} & \textbf{0.812} & \textbf{0.798} & \textbf{0.780} & \textbf{0.826} \\
\midrule
$100\%$ & Full Obs. & 0.806 & 0.842 & 0.844 & 0.861 & 0.909 \\
\bottomrule
\end{tabular}}
\end{minipage}
\hfill
\begin{minipage}{0.49\linewidth}
\centering
\vspace{-0.5em}
\small
\resizebox{\linewidth}{!}{
\begin{tabular}{@{}llccccc@{}}
\toprule
\textbf{Budget} & \textbf{Method} & \textbf{Global} & \textbf{Broad} & \textbf{Inter.} & \textbf{Hard} & \textbf{Extreme} \\
& (Top \%) & (100\%) & (50\%) & (30\%) & (10\%) & (5\%) \\ 
\midrule
$0\%$ & Full Imp. & 0.467 & 0.383 & 0.308 & 0.220 & 0.168 \\
\midrule
$30\%$ & Random & 0.480 & 0.451 & 0.416 & 0.398 & 0.377 \\
       & \textbf{Group-relational} & \textbf{0.484} & \textbf{0.460} & \textbf{0.432} & \textbf{0.430} & \textbf{0.415} \\
\midrule
$50\%$ & Random & 0.490 & 0.487 & 0.472 & 0.473 & 0.465 \\
       & \textbf{Group-relational} & \textbf{0.496} & \textbf{0.496} & \textbf{0.485} & \textbf{0.522} & \textbf{0.529} \\
\midrule
$100\%$ & Full Obs. & 0.507 & 0.570 & 0.594 & 0.687 & 0.720 \\
\bottomrule
\end{tabular}}
\end{minipage}
\end{table*}

\subsection{Ablation on Respondent Selection}

This subsection provides supplementary ablation results for Section~\ref{sec:exp-node}, focusing on how respondent selection interacts with sensitivity tiers under different observation budgets. While the main text emphasizes relative recovery to highlight where adaptive elicitation gains concentrate, here we report absolute accuracy at the final elicitation round to give a more complete picture of performance across the population. 

Table~\ref{tab:node_selection} reports round-4 accuracy on \textsc{CES} and \textsc{OpinionQA}, respectively, stratified by respondent sensitivity. Sensitivity tiers are defined based on the accuracy gap between full observation and imputation-only settings (Appendix~\ref{appendix:sensitivity}), ranging from \emph{Global} (all respondents) to increasingly restrictive subsets of high-sensitivity respondents (\emph{Broad}, \emph{Intermediate}, \emph{Hard}, and \emph{Extreme}). We compare Group group-relational respondent selection against a random baseline at fixed budgets. Across both datasets, Group-relational respondent selection consistently outperforms random selection at the same budget, with gains that become more pronounced as sensitivity increases. In particular, under a 50\% budget, the largest absolute improvements are observed in the \emph{Hard} and \emph{Extreme} tiers, where group-relational selection recovers a substantial fraction of the full-observation performance. In contrast, gains in the Global or Broad tiers are smaller, reflecting the fact that many low-sensitivity respondents are already well explained by group-level propagation. These ablations support the main conclusion of Section~\ref{sec:exp-node}: respondent selection is most effective when targeted toward highly sensitive (hard) individuals, and its benefits cannot be attributed solely to increased observation volume. Instead, selecting the \emph{right} respondents is crucial for translating limited budgets into meaningful accuracy gains.

\begin{figure*}[!t]
\centering
\begin{subfigure}{0.95\linewidth}
  \centering
  \includegraphics[width=\linewidth]{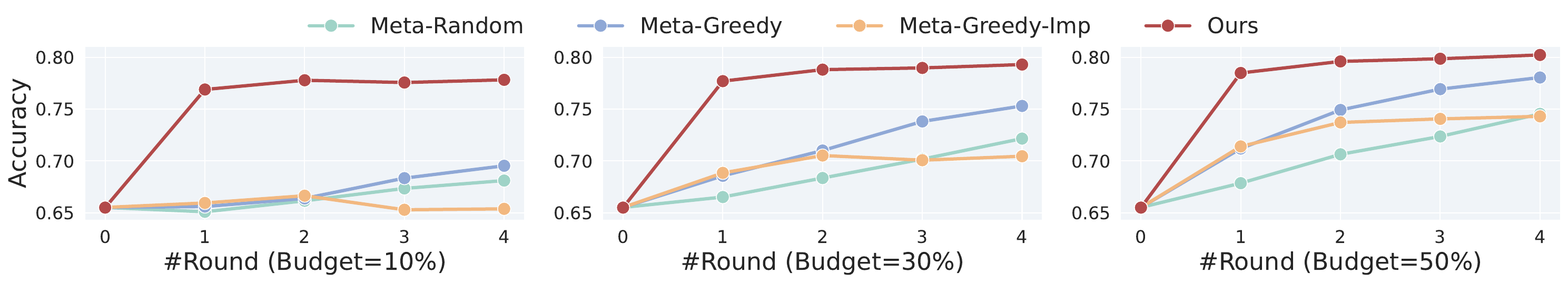}
  \caption{\textsc{CES}}
\end{subfigure}

\begin{subfigure}{0.95\linewidth}
  \centering
  \includegraphics[width=\linewidth]{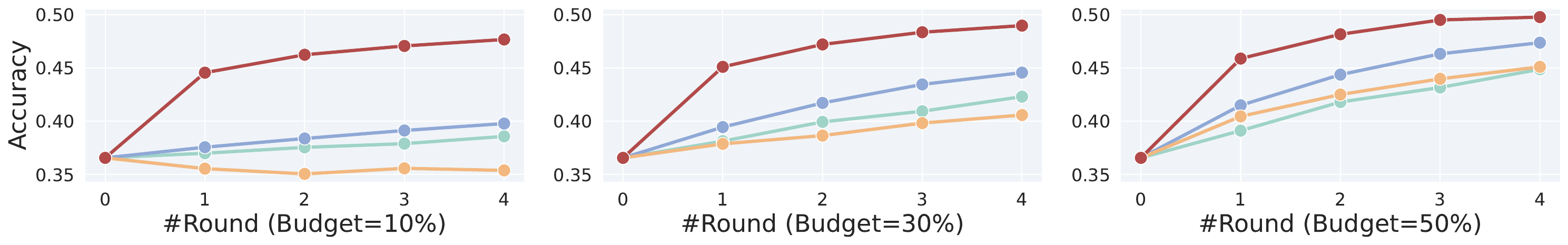}
  \caption{\textsc{OpinionQA}}
\end{subfigure}

\vspace{-0.5em}
\caption{Accuracy across interaction rounds under different query budgets using the {Llama-3.2-1B} backbone.\label{fig:main_results_1B}}
\vspace{-0.5em}
\end{figure*}

\begin{figure*}[!t]
\centering
\begin{subfigure}{0.95\linewidth}
  \centering
  \includegraphics[width=\linewidth]{Figures/main_results_acc_Llama-3-1-8B_ces.pdf}
  \caption{\textsc{CES-West}}
\end{subfigure}

\begin{subfigure}{0.95\linewidth}
  \centering
  \includegraphics[width=\linewidth]{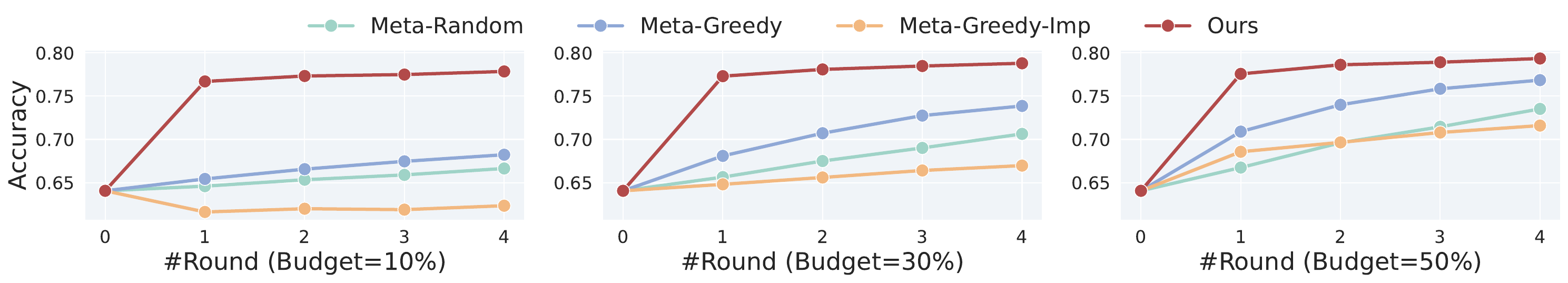}
  \caption{\textsc{CES-Midwest}}
\end{subfigure}

\vspace{-0.5em}
\caption{Accuracy across interaction rounds under different query budgets on \textsc{CES} across regions.\label{fig:main_results_region}}
\vspace{-1.0em}
\end{figure*}

\subsection{Results with Different Base Model}
\label{appendix:1B-results}

We evaluate the robustness of our approach to the choice of base language model by repeating the main elicitation experiments using a smaller backbone, \textsc{Llama-3.2-1B}, trained with full fine-tuning. Figure~\ref{fig:main_results_1B} reports accuracy across interaction rounds under different query budgets on \textsc{CES} and \textsc{OpinionQA}. Despite the substantially reduced model capacity, our method consistently outperforms all baselines across datasets, budgets, and rounds, and exhibits qualitative trends similar to those observed with the \textsc{Llama-3.1-8B} backbone fine-tuned with LoRA. In particular, our approach achieves rapid gains in early rounds, especially under low-budget settings, indicating that group-relational respondent selection and group-relational propagation remain effective even when the underlying LLM is smaller. Notably, we observe that full fine-tuning of the \textsc{Llama-3.2-1B} model yields performance generally comparable to LoRA-based fine-tuning of the \textsc{Llama-3.1-8B} model. These results indicate that the benefits of adaptive elicitation are largely attributable to the elicitation and propagation mechanism itself, rather than reliance on a highly capable backbone, and that the framework remains effective across different model scales and training regimes.

\subsection{Results with Different Region\label{appendix:regions}}

We further evaluate the robustness of adaptive elicitation under geographic distribution shift by testing the model on regions not used during meta-training. Following the same experimental setup as in Section~\ref{sec:exp-overall}, elicitation policies are learned using respondents from the South region and evaluated on held-out regions, including the West and Midwest. Figure~\ref{fig:main_results_region} reports accuracy across interaction rounds under different query budgets on \textsc{CES} for these regions. Across both regions, our method consistently outperforms baseline approaches and exhibits qualitative trends similar to those observed in the in-distribution setting. In particular, adaptive elicitation achieves substantial gains in the early rounds, even under low-budget constraints, indicating that group-relational respondent selection and group-relational propagation transfer effectively across regions. While absolute accuracies vary due to regional differences in demographic composition and opinion distributions, the relative improvements over \emph{Meta-Random}, \emph{Meta-Greedy}, and \emph{Meta-Greedy-Imp} remain stable. These results suggest that the proposed elicitation framework generalizes beyond the region used for meta-training and remains effective under realistic geographic distribution shifts.

\end{document}